\icmltitlerunning{Domain Generalization using Causal Matching}
\newif{\ifhidecomments}
	\newcommand{\amit}[1]{}
	\newcommand{\shruti}[1]{}
	\newcommand{\divyat}[1]{}
    \newcommand{\amit}[1]{\textcolor{red}{Amit: #1}}
    \newcommand{\shruti}[1]{\textcolor{blue}{Shruti: #1}}
    \newcommand{\divyat}[1]{\textcolor{brown}{Divyat: #1}}
\newcommand{\figref}[1]{Figure~\ref{#1}}
\newcommand{\secref}[1]{Section~\ref{#1}}
\newcommand{\tabref}[1]{Table~\ref{#1}}
\newtheorem{definition}{Definition}
\newtheorem{assumption}{Assumption}
\newcommand{\indep}{\perp \! \! \! \perp}
\newcommand{\vecx}{\mathbf{x}}
\newcommand{\vecxc}{\mathbf{x}_c}
\newcommand{\vecphi}{\Phi}
\newcommand{\eps}{\epsilon}
\newcommand{\norm}[1]{\left\lVert#1\right\rVert}
\newcommand{\rotm}{\texttt{rotMNIST}}
\newcommand{\rotfm}{\texttt{rotFashionMNIST}}
\newcommand{\mdg}{\texttt{MatchDG}}
\newcommand{\epmatch}{\texttt{ERM-PerfMatch}}
\newcommand{\ermatch}{\texttt{ERM-RandMatch}}
\newcommand{\hybrid}{\texttt{MDGHybrid}}
\newcommand{\xhdr}[1]{\textbf{#1}}
\begin{document}

\twocolumn[
\icmltitle{Domain Generalization using Causal Matching}



\icmlsetsymbol{equal}{*}

\begin{icmlauthorlist}
\icmlauthor{Divyat Mahajan}{msr1}
\icmlauthor{Shruti Tople}{msr2}
\icmlauthor{Amit Sharma}{msr1}
\end{icmlauthorlist}

\icmlaffiliation{msr1}{Microsoft Research, India}
\icmlaffiliation{msr2}{Microsoft Research, UK.}
\icmlcorrespondingauthor{Divyat Mahajan}{divyatmahajan@gmail.com}

\icmlkeywords{Machine Learning, ICML}

\vskip 0.3in
]



\printAffiliationsAndNotice{}  

\begin{abstract}
 In the domain generalization literature,
 a common objective is to learn representations independent of the domain after conditioning on the class label. We show that this objective is not sufficient: there exist counter-examples where a model fails to generalize to unseen domains even after satisfying class-conditional domain invariance. We formalize this observation through a structural causal model and show the importance of modeling \textit{within-class} variations for generalization. Specifically, classes contain \textit{objects} that characterize specific causal features, and domains can be interpreted as interventions on these objects that change non-causal features. We highlight an alternative condition: inputs across domains should have the same representation if they are derived from the same object. Based on this objective, we propose  matching-based algorithms when base objects are observed (e.g., through data augmentation) and approximate the objective when objects are not observed (\mdg). Our simple matching-based algorithms are competitive to prior work on out-of-domain accuracy for rotated MNIST, Fashion-MNIST, PACS, and Chest-Xray datasets. Our method \mdg\ also recovers ground-truth \textit{object matches}: on MNIST and Fashion-MNIST, top-10 matches from \mdg\ have over 50\% overlap with ground-truth matches. 
\end{abstract}

\section{Introduction}
Domain generalization is the task of learning a machine learning model that can generalize to unseen data distributions, after training on more than one data distributions. For example, a model trained on hospitals in one region may be deployed to another, or an image classifier may be deployed on slightly rotated images. Typically, it is assumed that the different domains share some ``stable'' features whose relationship with the output is invariant across domains~\cite{piratla2020efficient} and the goal is to learn those features. 
A popular class of methods aim to learn representations that are independent of domain \textit{conditional on class}~\cite{li2018conddomaingen, li2018adversarialcada, ghifary2016scatter, hu2019discriminant}, based on evidence of their superiority~\cite{zhao2019learninginvariant} to methods that learn representations that are marginally independent of domain~\cite{muandet2013domain,ganin2016dann}. 

In this work, we show that the class-conditional domain-invariant objective for representations is insufficient. We provide counter-examples where a feature representation satisfies the objective but still fails to generalize to new domains, both theoretically and empirically. Specifically, when the distribution of the stable features to be learnt varies across domains, class-conditional objective is insufficient to learn the stable features (they are optimal only when the distribution of stable features is the same across domains). Differing distributions of stable features within the same class label is common in real-world datasets, e.g., in digit recognition, the stable feature \textit{shape} may differ based on people's handwriting, or medical images  may differ based on variation in body characteristics across people. Our investigation reveals the importance of considering within-class variation in the stable features.


To derive a better objective for domain generalization, we represent the within-class variation in stable features using a structural causal model, building on prior work~\cite{heinze2017conditional} from single-domain generalization.  
 Specifically, we construct a model for the data generation process that assumes each input is constructed from a mix of stable (\emph{causal}) and domain-dependent (\emph{non-causal}) features, and only the stable features cause the output. We consider domain as a special intervention that changes the non-causal features 
of an input, and posit that 
an ideal classifier should be based only on the causal features.  Using d-separation, we show that the correct objective is to build  a representation that is invariant conditional on each \textit{object}, where an object is defined as  a set of inputs that share the same causal features (e.g., photos of the same person from different viewpoints or augmentations of an image in different rotations, color or background).  
When the object variable is observed (e.g., in self-collected data or by dataset augmentation), we propose a \textit{perfect-match}  regularizer for domain generalization that minimizes the distance between representations of the same object across domains. 

In practice, however, the underlying  objects  are not always known. We therefore propose an approximation that aims to learn which inputs share the same object, under the assumption that inputs from the same class have more similar causal features than those from different classes. 
Our algorithm, \mdg\ is an iterative algorithm that starts with randomly matched inputs from the same class and builds a representation  using contrastive learning such that inputs sharing the same causal features are closer to one another.  
While past work has used contrastive loss to regularize the empirical risk minimization (ERM) objective~\cite{dou2019masf}, we demonstrate the importance of a two-phase method that first learns a representation independent of the ERM loss, so that classification loss does not interfere with the learning of stable features. 
In datasets with data augmentations, we extend \mdg\ to also use the perfect object matches obtained from pairs of original and augmented images (\hybrid). 

We evaluate our matching-based methods on rotated MNIST and Fashion-MNIST,  PACS and  Chest X-ray datasets. On all datasets, the simple methods \mdg\ and \hybrid\ are competitive to state-of-the-art methods for out-of-domain accuracy. On the rotated MNIST and Fashion-MNIST datasets where the ground-truth objects are known, \mdg\  learns to makes the representation more similar to their ground-truth  matches (about 50\% overlap for top-10 matches), even though the method does not have access to them.  Our results with simple matching methods show the importance of enforcing the correct invariance condition. 

\xhdr{Contributions.} To summarize, our contributions include: 
\textbf{1). }An object-invariant  condition for domain generalization that highlights a key limitation of previous approaches, 
\\
\textbf{2). } When object information is not available, a two-phase, iterative algorithm to approximate object-based matches.
%
Also, the code repository can be accessed at: \url{https://github.com/microsoft/robustdg}

\section{Related Work}
\textbf{Learning common representation.} 
To learn a generalizable classifier, several methods enforce the learnt representation $\Phi(\vecx)$ to be independent of domain marginally or conditional on class label, using divergence measures such as maximum mean discrepancy~\cite{muandet2013domain, li2018adversarialfeature, li2018conddomaingen}, adversarial training with a domain discriminator~\cite{ganin2016dann, li2018adversarialcada, albuquerque2020generalizing}, discriminant analysis~\cite{ghifary2016scatter,hu2019discriminant}, and other techniques~\cite{ghifary2015multitaskae}.

Among them, several works~\cite{zhao2019learninginvariant, johansson2019supportdomaininvariant, akuzawa2019adversarial} show that the class-conditional methods~\cite{li2018conddomaingen, li2018adversarialcada, ghifary2016scatter, hu2019discriminant} are better than those that enforce marginal domain-invariance of features~\cite{muandet2013domain, ganin2016dann, li2018adversarialfeature, albuquerque2020generalizing}, whenever there is a varying distribution of class labels across domains. We show that the class-conditional invariant is also not sufficient for generalizing to unseen domains. 

\textbf{Causality and domain generalization}. Past work has shown the connection between causality and generalizable predictors~\cite{peters2016causal,christiansen2020causal}. There is work on use of causal reasoning for domain adaptation~\cite{gong2016domainadapt,heinze2017conditional,magliacane2018domainadaptcausal,rojas2018invarianttheory} that assumes $Y \to X$ direction and other work~\cite{arjovsky2019irm, peters2016causal} on connecting causality that assumes $X \to Y$. Our SCM model unites these streams by introducing $Y_{true}$ and labelled $Y$ and develop an invariance condition for domain generalization that is valid under both interpretations. 
Perhaps the closest to our work is by \cite{heinze2017conditional} who use the object concept in single-domain datasets for better generalization. We extend their SCM to the multi-domain setting and use it to show the inconsistency of prior methods. In addition, while \cite{heinze2017conditional} assume objects are always observed, we also provide an algorithm for the case when objects are unobserved. 
 
 \textbf{Matching and Contrastive Loss.} Regularizers based on matching have been proposed for domain generalization. \cite{motiian2017ccsa} proposed matching representations of inputs from the same class. \cite{dou2019masf} used a contrastive (triplet) loss to regularize the ERM objective. In contrast to regularizing based on contrastive loss, our algorithm \mdg\ proceeds in two phases and learns a representation independent of the ERM objective. Such an iterative 2-phase algorithm has empirical benefits, as we will show in Suppl.~\ref{sm:metrics-phase2}. Additionally, we propose an ideal object-based matching algorithm when objects are observed.  
 

\textbf{Other work.} Others approaches to domain generalization include meta-learning~\cite{li2018learningmetadg,balaji2018metareg}, dataset augmentation~\cite{volpi2018generalizingunseen,shankar2018generalizing}, parameter decomposition~\cite{piratla2020efficient, li2017pacs}, and enforcing domain-invariance of the optimal
$P(Y|\vecphi(\vecx))$~\cite{arjovsky2019irm,ahuja2020irmgames}. We empirically compare our algorithm to some of them. 

\section{Insufficiency of class-conditional invariance}

\begin{figure*}[th!]
\centering
\subfigure[][Simple Example]{
    \includegraphics[width=0.20\linewidth]{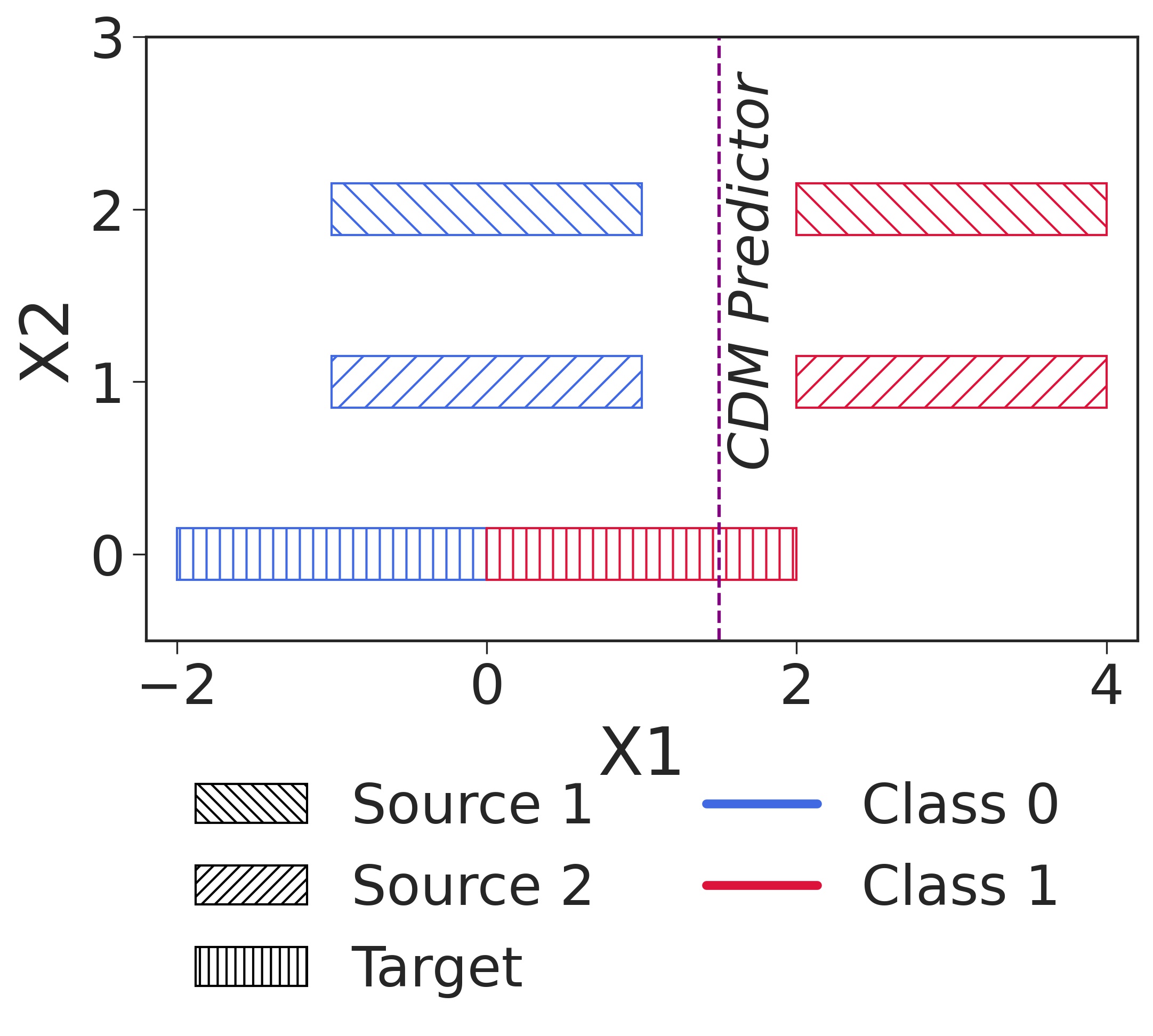}
    \label{fig:example-data}
 }
  \subfigure[][Slab Dataset (\textit{Slab (y-axis) is the stable feature)}]{
    \includegraphics[width=0.68\linewidth]{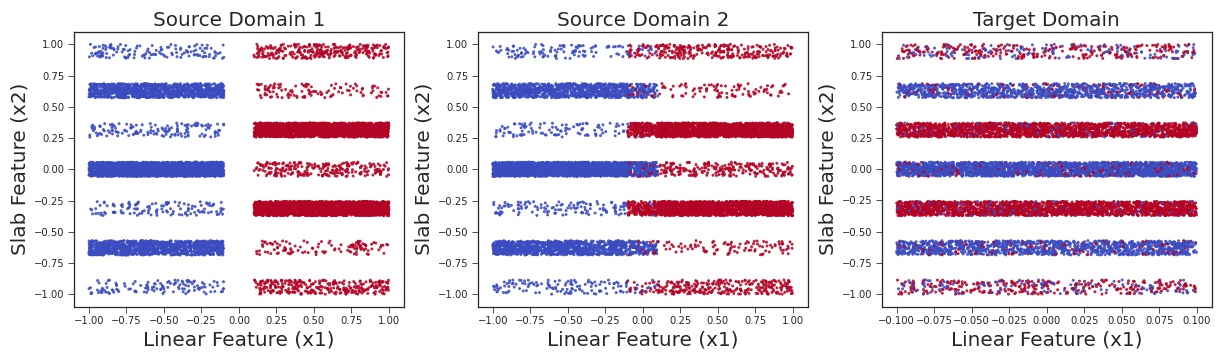}
    \label{fig:slab-data}
 }
 \caption{Two datasets showing the limitations of class-conditional domain-invariance objective. a) The CDM predictor is domain-invariant given the class label but does not generalize to the target domain; b) Colors denote the two ground-truth class labels. For class prediction, the linear feature exhibits varying level of noise across domains. The stable slab feature also has noise but it is invariant across domains.}
 \vspace{-0.2cm}
\label{fig:synthetic-data}
\end{figure*}

Consider a classification task where the learning algorithm has access to i.i.d. data from $m$ domains,  $\{(d_i, \vecx_i, y_i)\}^{n}_{i=1} \sim ({D}_m, \mathcal{X}, \mathcal{Y})^{n}$ where $d_i \in D_m$ and 
$D_m \subset \mathcal{D}$ is a set of $m$ domains. Each training input $(d, \vecx,y)$ is sampled from an unknown  distribution $\mathcal{P}_m(D, X, Y)$. 
  The domain generalization task is to learn a single classifier that generalizes well to  unseen domains $d'\not \in D_m$ and to new data  from the same domains~\cite{shankar2018generalizing}. The optimum classifier can be written as: 
$f^* = \arg \min_{f\in\mathcal{F}} \mathbb{E}_{(d, \vecx, y) \sim\mathcal{P}}[l(y^{(d)}, f(\vecx^{(d)}))]
$
, where $(d, \vecx, y)  \sim \mathcal{P}$ over $(\mathcal{D}, \mathcal{X}, \mathcal{Y})$.


As mentioned above, a popular line of work enforces that the learnt representation $\Phi(\vecx)$ be independent of domain conditional on the class~\cite{li2018conddomaingen, li2018adversarialcada, ghifary2016scatter, hu2019discriminant}, $\Phi(\vecx) \indep D| Y$. 
Below we present two counter-examples showing that the class-conditional objective is not sufficient. 

\subsection{A simple counter-example}
\label{sec:simple-syn}
We construct an example where $\Phi(\vecx) \indep D|Y $, but still the classifier does not generalize to new domains. Consider a two dimensional problem where $x_1 = x_c + \alpha_d; x_2=\alpha_d$ where $x_c$ and $\alpha_d$ are unobserved variables, and $\alpha_d$  varies with domain (Figure~\ref{fig:example-data}).  The true function depends only on the stable feature $x_c$, $y=f(x_c)=I(x_c \geq 0)$. Suppose there are two training domains with $\alpha_1=1$ for domain 1 and  $\alpha_2 =2$ for domain 2, and the test domain has $\alpha_3=0$ (see Figure~\ref{fig:example-data}).  Suppose further that the conditional distribution of $X_C$ given $Y$ is a uniform distribution that changes across domains: for domain 1, $X_c |Y=1 \sim \mathcal{U}(1,3); X_c|Y=0 \sim \mathcal{U}(-2,0);$ and for domain 2,    $X_c |Y=1 \sim \mathcal{U}(0,2); X_c |Y=0 \sim \mathcal{U}(-3,-1)$. Note that the distributions are picked such that  $\phi(x_1, x_2) = x_1$ satisfies the conditional distribution invariant, $\phi(x) \indep D|Y$. The optimal ERM classifier based on this representation, ($I(x_1\geq 1.5)$ has 100\% train accuracy on both domains.  But for the test domain with $\alpha_d=0; X_c|Y=1 \sim \mathcal{U}(0,2); X_c|Y=0 \sim U(-2,0) $, the classifier fails to generalize. It obtains 62.5\% test accuracy (and 25\% accuracy on the positive class), even though its representation satisfies class-conditional domain invariance. In comparison, the ideal representation is $x_1-x_2$ which attains  100\% train accuracy and 100\% test domain accuracy, and does not  satisfy the class-conditional invariant. 

The above counter-example is due to the changing distribution of $x_c$ across domains. If $P(X_c|Y)$ stays the same across domains, then class-conditional methods would not incorrectly pick $x_1$ as the representation. Following \cite{akuzawa2019adversarial}, we claim the following (proof in Suppl.~\ref{app:proofprop1}).
\begin{restatable}{proposition}{classcondprop}
Under the domain generalization setup as above, if $P(X_c|Y)$ remains the same across domains where $x_c$ is the stable feature, then the class-conditional domain-invariant objective for learning representations yields a generalizable classifier such that the learnt representation $\Phi(\vecx)$ is independent of the domain given $x_c$. Specifically, the entropy $H(d|x_c) = H(d|\Phi, x_c)$.
\end{restatable}

However, if $P(X_C|Y)$ changes across domains, then we cannot guarantee the same: 
 $H(d| x_c)$ and $H(d|\Phi, x_c)$ may not be equal. For building generalizable classifiers in such cases,  this example shows that we need an additional constraint on $\Phi$, $H(d| x_c)=H(d|\Phi, x_c)$; i.e. domain and representation should be independent conditioned on $x_c$. 
 

\subsection{An empirical study of class-conditional methods}
\label{sec:slab-dataset}

As a more realistic example, 
consider the slab dataset introduced for detecting simplicity bias in neural networks~\cite{shah2020pitfalls} that contains a feature with spurious correlation. It comprises of two  features and a binary label; ($x_1$) has a linear relationship with the label and the other feature ($x_2$) has a piece-wise linear relationship with the label which is a stable relationship. 
The relationship of the linear feature with the label changes with domains (\ref{app:syn-implement}); we do so by adding noise with probability $\eps=0$ for domain 1 and $\eps=0.1$ for domain 2. On the third (test) domain, we add noise with probability 1 (see Figure~\ref{fig:slab-data}).  We expect that methods that rely on the spurious feature $x_1$ would not be able to perform well on the out-of-domain data. 

The results in Table~\ref{tbl:slab-results} (implementation details in Appendix \ref{app:syn-implement}) show that ERM is unable to learn the slab feature, as evident by poor generalization to the target domain, despite very good performance on the source domains. We also show that  methods based on learning invariant representations by unconditional (DANN, MMD, CORAL) and conditional distribution matching (CDANN, C-MMD, C-CORAL), and matching same-class inputs (RandomMatch)~\cite{motiian2017ccsa} fail to learn the stable slab feature. Note that Proposition 1 suggested the failure of conditional distribution matching (CDM) algorithms when the distribution of stable feature (slab feature) is different across the source domains. However, the slab dataset has similar distribution of stable feature (slabs) across the source domains, yet the CDM algorithms fail to generalize to the target domain. It can be explained by considering the spurious linear feature, which can also satisfy the CDM constraint by ``shifting'' the $y$-conditional distributions along the linear feature. We conjecture that the model may first learn the linear feature due to its simplicity~\cite{shah2020pitfalls}, and then retain the spurious linear feature upon further optimization since it satisfies the CDM constraint. This shows that the CDM methods can empirically fail even when there is an equal distribution of stable features across domains. 

How can we ensure that a model learns the stable, generalizable feature $x_2$? We turn to our example above, where the required invariant was that the representation $\Phi(\vecx)$ should be independent of domain given the stable feature. We apply this intuition and construct a model that enforces that the learnt representation be independent of domain given $x_2$. We do so by minimizing the $\ell_2$-norm of the representations for data points from different domains that share the same slab value  (details of the \textit{PerfectMatch} method in Section~\ref{sm:perf-match-approach}). The results improve substantially: out-of-domain accuracy is now 78\%.  



In the next section, we formalize the intuition of conditioning on stable features $x_c$ using a causal graph, and introduce the concept of \textit{objects} that act as proxies of stable features. 

\begin{table}[t]
\caption{Slab Dataset: Source domains with noisy linear component with probability 0.0 and 0.1, target domain with noise 1.0. Mean and standard deviation over 10 different seed values for each method. The results for DANN~\cite{ganin2016dann}, CDANN~\cite{li2018adversarialcada}, MMD, C-MMD~\cite{li2018adversarialfeature}, CORAL, C-CORAL~\cite{sun2016deep} were computed by using their implementations in DomainBed~\cite{gulrajani2020domainbed}.}
\centering
\footnotesize
\begin{tabular}{@{}l | c c |c@{}}
\toprule
Method & Source 1 & Source 2 & Target \\ 
\midrule
ERM & \textbf{100.0} (0.0 ) & 96.0 (0.25) & 57.6 (6.58) \\
DANN & 99.9 (0.07) & 94.8 (0.25) & 53.0 (1.41)\\
MMD & 99.9 (0.01) & 95.9 (0.27) & 62.9 (5.01) \\
CORAL &  99.9 (0.01) & 96.0 (0.27) & 63.1 (5.86) \\
\midrule
RandMatch & \textbf{100.0} (0.0) & 96.1 (0.22) & 59.5 (3.50) \\
CDANN & 99.9 (0.01) & 96.0 (0.27) & 55.9 (2.47)\\
C-MMD & 99.9 (0.01) &  96.0 (0.27) & 58.9 (3.43) \\
C-CORAL & 99.9 (0.01) & 96.0 (0.27) & 64.7 (4.69) \\
\midrule
PerfMatch & 99.9 (0.05) & \textbf{97.8} (0.28)  & \textbf{77.8} (6.01) \\
\bottomrule
\end{tabular}
\label{tbl:slab-results}
\end{table}

\section{A Causal View of Domain Generalization}
\label{sec:def-dg}
\subsection{Data-generating process}
\label{sec:dgp}
\figref{fig:scm-mnist} shows a structural causal model (SCM) that describes the data-generating process for the domain generalization task. For intuition, consider a  task of classifying the type of item or screening an image for a medical condition. Due to human variability or by design (using data augmentation), the data generation process yields variety of images for each class, sometimes  multiple views for the \emph{same object}. Here each view can be considered as a different \emph{domain} $D$, the label for item type or medical condition as the class $Y$, and the image pixels as the features $X$. Photos of the same item or the same person correspond to a common \emph{object} variable, denoted by $O$. To create an image, the data-generating process first samples an object and view (domain) that may be correlated to each other 
(shown with dashed arrows). The pixels in the photo are caused by both the object and the view, as shown by the two incoming arrows to $X$. The object also corresponds to  high-level \textit{causal} features $X_C$ that are common to any image of the same object, which in turn are used by humans to label the class $Y$. We call $X_C$ as causal features because they directly cause the class  $Y$.
\begin{figure}[t]
\centering
\subfigure[][Image classification.]{
   \includegraphics[width=0.38\linewidth]{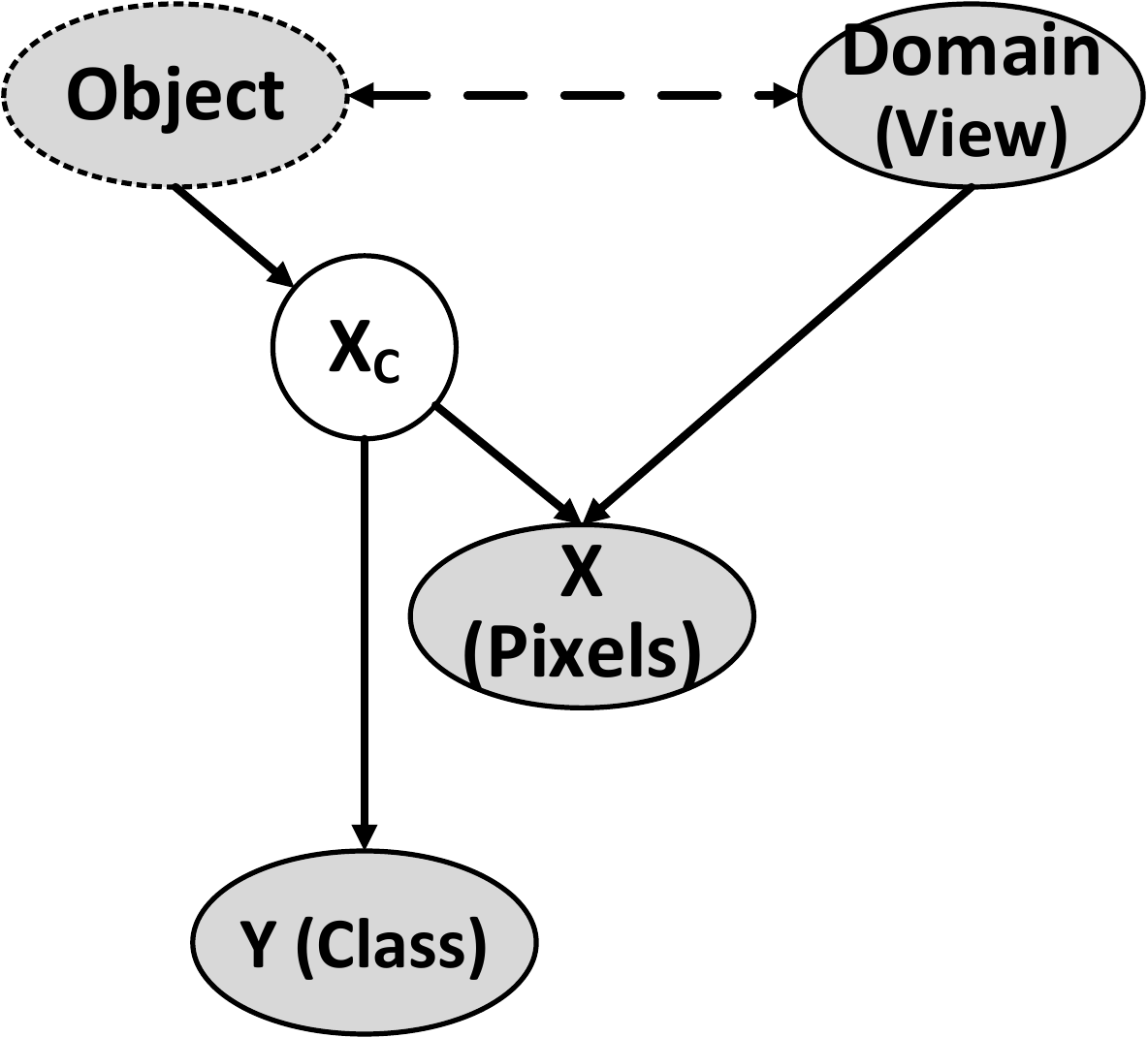}
   \label{fig:scm-mnist}
 }
 \subfigure[][General SCM.]{
    \includegraphics[width=0.38\linewidth]{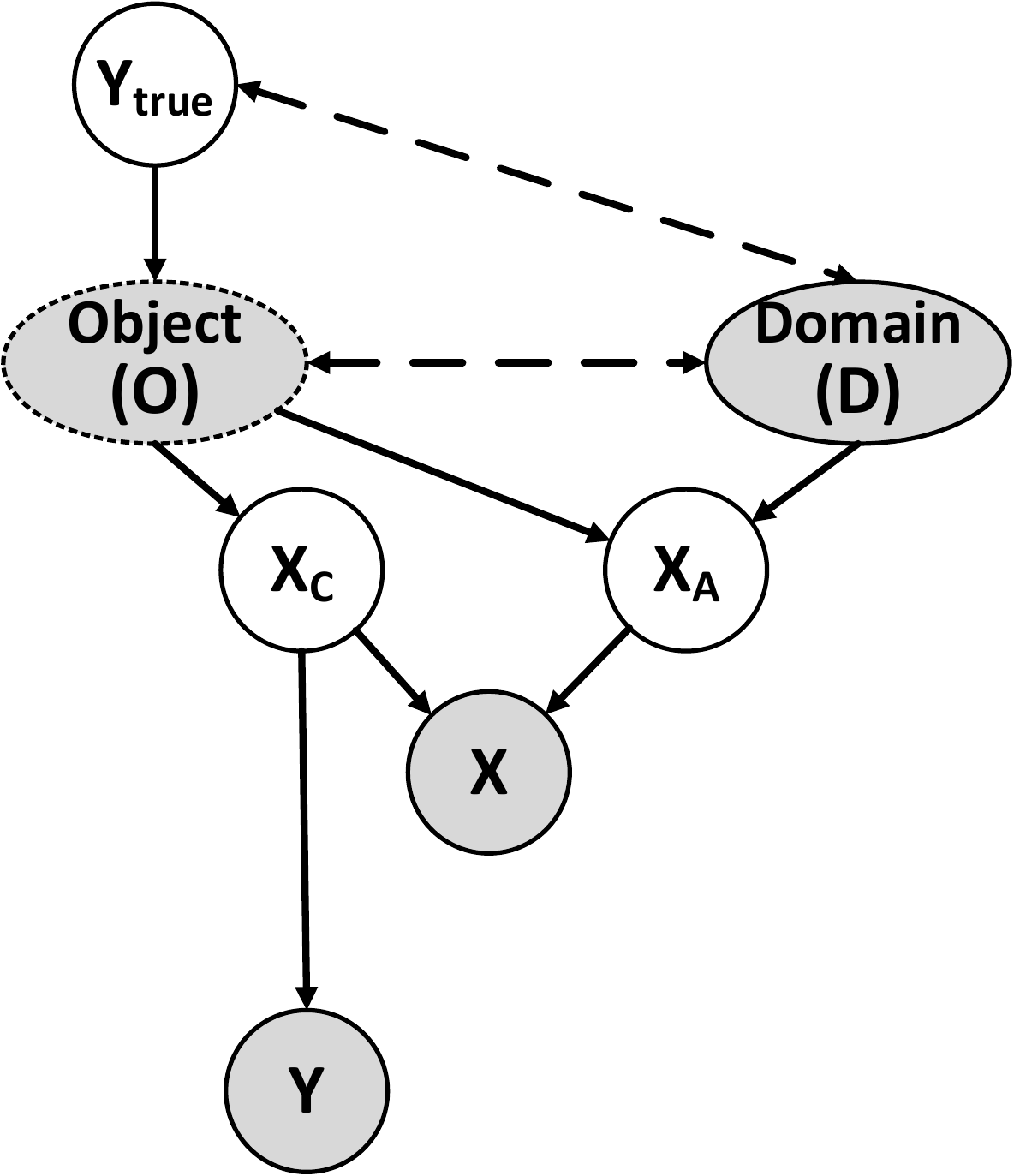}
    \label{fig:scm-withnoise}
 }
 \caption{Structural causal models for the data-generating process. Observed variables are shaded; dashed arrows denote correlated nodes. 
    \emph{Object}  may not be observed.}
\label{fig:scm-all}
\end{figure}

The above example is typical of a domain generalization problem; a general SCM is shown in \figref{fig:scm-withnoise}, similar to the graph in \cite{heinze2017conditional}.  In general, the underlying \emph{object} for each input $\vecx_i^{(d)}$  may not be observed. Analogous to the object-dependent (\textit{causal}) features $X_C$, we introduce a node for domain-dependent high-level features of the object $X_A$. Changing the domain can be seen as an intervention: for each observed $\vecx_i^{(d)}$, there are a set of (possibly unobserved) counterfactual inputs $\vecx_j^{(d')}$ where $d \neq d'$, such that all correspond to the same object (and thus share the same $X_C$). 
For completeness, we also show the true unobserved label of the object which led to its generation as $Y_{true}$ (additional motivation for the causal graph is in Suppl.~\ref{app:graphdetails}). Like the object $O$, $Y$ may be correlated with the domain $D$.  
Extending the model in \cite{heinze2017conditional}, we  allow that objects can be correlated with the domain conditioned on $Y_{true}$. 
As we shall see, considering the relationship of the \emph{object} node becomes the key piece for developing the invariant condition. 
The SCM corresponds to the following non-parametric equations. 
{\small
 \begin{align*}
     o &:= g_o (y_{true}, \eps_o, \eps_{od}) &  & \vecx_c = g_{xc}(o)  \\
     \vecx_a &:= g_{xa}(d,o, \eps_{xa}) & \vecx := g_x(\vecx_c, \vecx_a, \eps_x) & y := h(\vecx_c, \eps_y) 
\end{align*}
}%
where $g_o$, $g_{xc}$, $g_{xa}$, $g_{x}$ and $h$ are general non-parametric functions. The error $\eps_{od}$ is correlated with domain $d$ whereas $\eps_o$, $\eps_{xa}$, $\eps_x$ and $\eps_y$  are mutually independent error terms that are independent of all other variables. Thus, noise in the class label is independent of domain. Since $x_c$ is common to all inputs of the same object, $g_{xc}$ is a deterministic function of $o$. 
In addition, the SCM provides conditional-independence conditions that all data distributions $\mathcal{P}$ must satisfy, through the concept of d-separation (Suppl.~\ref{app:d-sep}) and the perfect map assumption~\cite{pearl2009book}. 



\subsection{Identifying the invariance condition}
\label{sec:objcond}
From Figure~\ref{fig:scm-withnoise}, $X_C$ is the node that causes $Y$. Further, by d-separation, the class label is independent of domain conditioned on $X_C$, $Y\indep D|X_C$. Thus our goal is to learn $y$ as $h(\vecx_c)$ where $h: \mathcal{C} \rightarrow \mathcal{Y}$. The ideal loss-minimizing function $f^*$ can be rewritten as (assuming $\vecxc$ is known):
\begin{equation}\label{eq:rep-ideal-loss}
\small
\arg \min_f \mathbb{E}_{(d, \vecx,y)}l(y, f(\vecx))= \arg \min_h \mathbb{E}[l(y, h(\vecxc))]
\end{equation}
Since $X_C$ is unobserved, this implies that we need to learn it through a representation function $\vecphi:\mathcal{X} \rightarrow \mathcal{C}$.  
Together, $h(\vecphi(x))$ leads to the desired classifer $f:\mathcal{X} \rightarrow \mathcal{Y}$. 

\noindent \textbf{Negative result on identification}.
Identification of causal features is a non-trivial problem~\cite{magliacane2018domainadaptcausal}. We first show that $x_C$ is unidentifiable given observed data $P(X, Y, D, O)$ over multiple domains.
Given the same probability distribution $P(X,Y,D,O)$, multiple values of $X_C$ are possible. Substituting for $o$ in the SCM equations, we obtain, $y=h(g_{xc}(o), \eps_y); \vecx=g_x(g_{xc}(o), g_{xa}(d,o,\eps_{xa}), \eps_x)$. By choosing $g_x$ and $h$ appropriately, different values of $g_{xc}$ (that determine $x_c$ from $o$) can lead to the same observed values for $(y,d,o,x)$.  
The proof for the following proposition is in Supp.~\ref{app:proofprop2}.
\begin{restatable}{proposition}{identifyprop}
Given observed data distribution $P(Y,X, D, O)$ that may also include data obtained from interventions on domain $D$, multiple values of $X_C$ yield exactly the same observational and interventional distributions and hence $X_c$ is unidentifiable.
\end{restatable}

\subsection{A ``perfect-match'' invariant}
\label{sm:perf-match-approach}
In the absence of identifiability, we proceed to find an invariant that can characterize $X_c$. 
By the d-separation criterion, we see that $X_C$ satisfies two conditions: \textbf{ 1)} $X_C \indep D| O$, {\bf 2)}
    $X_C \not \indep O$; where $O$ refers to the object variable and $D$ refers to a domain.
The first is an invariance condition: $X_C$ does not change with different domains for the same object. 
   To enforce this, we stipulate that the average pairwise distance between $\vecphi(x)$ for inputs across  domains for the same object is 0, 
$\sum_{\Omega(j,k)=1; d\neq d'} \operatorname{dist}(\vecphi(\vecx_j^{(d)}), \vecphi(\vecx_k^{(d')})) = 0$.
Here $\Omega:\mathcal{X} \times \mathcal{X}\rightarrow \{0,1\}$  is a \emph{matching} function that is 1 for pairs of inputs across domains corresponding to the same object, and  0 otherwise.

However, just the above invariance will not work: we need the representation to be informative of the object $O$ (otherwise even a constant $\vecphi$ minimizes the above loss).  
Therefore, the second condition stipulates that $X_C$ should be informative of the object, and hence about $Y$. 
We add the standard classification loss, leading to constrained optimization,
\begin{multline} \label{eq:erm-pmatch-hardloss}
\begin{medsize}
    f_{\tt perfectmatch} = \arg \min_{h, \vecphi} \sum_{d=1}^{m} L_d(h(\vecphi(X)), Y) \end{medsize}\\ \begin{medsize}\texttt{ s.t.} \sum_{\Omega(j,k)=1; d\neq d'} \operatorname{dist}(\vecphi(\vecx_j^{(d)}), \vecphi(\vecx_k^{(d')})) = 0 \end{medsize}
    \end{multline}
where $L_d(h(\vecphi(X), Y))=\sum_{i=1}^{n_d} l(h(\vecphi(\vecx_i^{(d)}), y_i^{(d)}) $. 
Here $f$ represents the composition $h \circ \vecphi$. E.g., a neural network with $\vecphi(x)$  as its r${th}$ layer, and $h$ being the rest of the layers. 

Note that there can be multiple $\Phi(\vecx)$ (e.g., linear transformations) that are equally good for the prediction task. Since $x_c$ is unidentifiable, we focus on the set of \textit{stable} representations that are d-separated from $D$ given $O$. Being independent of domain given the object, they cannot have any association with $X_a$, the high-level features that directly depend on domain (Figure~\ref{fig:scm-all}b).
The proof for the next theorem is in Suppl.~\ref{app:thm1proof}. 

\begin{restatable}{theorem}{perfectmatchtheorem}\label{thm:main-match}
    For a finite number of domains $m$, as the number of examples in each domain $n_d\rightarrow \infty$, \\
        \noindent 1. The set of representations that satisfy the  condition  $\sum_{\Omega(j,k)=1; d\neq d'} \operatorname{dist}(\vecphi(\vecx_j^{(d)}), \vecphi(\vecx_k^{(d')})) =0$ contains the optimal $\vecphi(\vecx)=X_C$ that minimizes the domain generalization loss in \eqref{eq:rep-ideal-loss}. \\
    \noindent 2. Assuming that $P(X_a|O, D)<1$ for every high-level feature $X_a$ that is directly caused by domain, and for P-admissible loss functions~\cite{miller1993loss} whose minimization is conditional expectation (e.g., $\ell_2$ or cross-entropy), 
a loss-minimizing classifier for the  following loss  is the true function $f^*$, 
    for some value of $\lambda$. 
    \vspace{-0.2em}
\begin{multline} \label{eq:erm-pmatch-loss}
    \begin{medsize}
    f_{\tt perfectmatch} = \arg \min_{h, \vecphi} \sum_{d=1}^{m} L_d(h(\vecphi(X)), Y) +\end{medsize} \\ \begin{medsize}\lambda \sum_{\Omega(j,k)=1; d\neq d'} \operatorname{dist}(\vecphi(\vecx_j^{(d)}), \vecphi(\vecx_k^{(d')})) \end{medsize}
\end{multline}
\end{restatable}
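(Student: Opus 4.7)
My plan is to split the theorem into its two parts, with the bulk of the work concentrated in Part 2.

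For Part 1, I would invoke the structural equation $\vecxc = g_{xc}(o)$ directly. Because $g_{xc}$ is deterministic, any two inputs sharing the same object value $o$ have identical $X_C$ regardless of the domain in which they were generated. Setting $\vecphi(\vecx) = X_C$ therefore yields $\operatorname{dist}(\vecphi(\vecx_j^{(d)}), \vecphi(\vecx_k^{(d')})) = 0$ for every pair with $\Omega(j,k) = 1$, so $X_C$ belongs to the constrained set. By \eqref{eq:rep-ideal-loss}, this same $X_C$ is an optimal representation for the ideal domain-generalization loss, establishing Part 1.

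For Part 2, I would first check that the pair $(\vecphi^{\star}, h^{\star}) = (X_C,\, \mathbb{E}[Y \mid X_C = \cdot])$ simultaneously hits the floor of both terms in the regularized objective: the matching penalty equals $0$ by Part 1, and since $Y \indep D \mid X_C$ by d-separation in \figref{fig:scm-withnoise}, the classification term equals the Bayes risk of predicting $Y$ from $X_C$, which by P-admissibility is achieved by the conditional expectation $h^{\star}$. Hence for every $\lambda \geq 0$ this pair globally minimizes \eqref{eq:erm-pmatch-loss} and the resulting classifier $h^{\star} \circ \vecphi^{\star}$ coincides with $f^{\star}$ from \eqref{eq:rep-ideal-loss}.

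It remains to show that every minimizer of \eqref{eq:erm-pmatch-loss} (for a suitable $\lambda$) yields the same classifier, so that no ``spurious'' $\vecphi$ encoding domain-dependent features can tie this optimum. For sufficiently large $\lambda$ and in the $n_d \to \infty$ limit, any minimizer must drive the penalty to zero. Chaining matches through a common partner domain (a transitivity argument across any two of the $m \geq 2$ training domains) then forces $\vecphi$ to be constant on the full set of inputs generated from each fixed object, i.e.\ $\vecphi$ factors through $O$. The assumption $P(X_a \mid O, D) < 1$ ensures that $X_A$ has genuine within-$(O,D)$ variability, which is what rules out representations that appear object-invariant on the support only by accident of $X_A$ being a deterministic function of $(O,D)$. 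Once $\vecphi$ is pinned down to be a function of $O$, P-admissibility gives $h(\vecphi(\vecx)) = \mathbb{E}[Y \mid \vecphi(X)] = \mathbb{E}[Y \mid X_C] = f^{\star}(\vecx)$ almost surely, because $Y \indep O \mid X_C$ and $X_C = g_{xc}(O)$.

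The main obstacle I anticipate is this last identification step: converting ``zero matching penalty on the finite domain set $D_m$'' plus ``$P(X_a \mid O, D) < 1$'' into the crisp conclusion that $\vecphi$ is informationally equivalent to $X_C$. The matching constraint only equates representation values across \emph{distinct} domains, so ruling out within-domain encoding of $X_A$-correlated features requires carefully exploiting the strict inequality together with the non-parametric SCM equations; this is where one earns the assumption $P(X_a \mid O, D) < 1$, and writing it out rigorously (without silently importing a stronger identifiability hypothesis) is the crux of the proof.
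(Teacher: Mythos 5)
Your proposal is correct and follows essentially the same route as the paper's proof: Part 1 via the determinism of $g_{xc}$, and Part 2 by using $P(X_a\mid O,D)<1$ to show (by the same non-zero-probability-of-disagreement argument) that any zero-penalty $\vecphi$ cannot depend on domain-caused features, then invoking P-admissibility together with $Y \indep D \mid X_C$ to identify the minimizer as $\mathbb{E}[Y\mid X_C]=f^*$, and finally relating the constrained and penalized objectives via a Lagrange-multiplier argument. Your observation that $(X_C,\mathbb{E}[Y\mid X_C])$ minimizes both terms simultaneously for every $\lambda\geq 0$ is a slightly cleaner packaging of the paper's ``for some $\lambda$'' step, and you correctly locate the crux exactly where the paper's own argument does the work.
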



\subsection{Past work: Learning common representation}
\label{sec:limit-prior}
Using the SCM, we now compare the proposed invariance condition to  domain-invariant and class-conditional domain-invariant objectives. 
d-separation results show that both these objectives are incorrect: in particular, the class-conditional objective $\vecphi(\vecx) \indep D|Y$ is not satisfied by $X_C$,  ($X_C \not \indep D|Y_{true}$) due to a path through $O$. Even with infinite data across domains, they will not learn the true $X_C$. The proof is in Suppl.~\ref{app:proofcorr1}.

\begin{restatable}{proposition}{othermethodscor}
The conditions enforced by domain-invariant ($\vecphi(x)\indep D$) or class-conditional domain-invariant ($\vecphi(x) \indep D | Y$) methods are not satisfied by the causal representation $X_C$. Thus, without additional assumptions, the set of representations that satisfy any of these conditions does not contain $X_C$, even as $n\rightarrow \infty$. 
\end{restatable}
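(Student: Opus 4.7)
The plan is to apply d-separation to the SCM in Figure~\ref{fig:scm-withnoise}, exhibit an active path between $X_C$ and $D$ that survives conditioning on $Y$, and then invoke the perfect-map assumption (stated alongside Definition~1) to lift the graphical conclusion to a genuine distributional non-independence. Since, in the $n\to\infty$ limit, the feasible sets of the two representation-learning objectives coincide with $\{\vecphi : \vecphi(\vecx)\indep D\}$ and $\{\vecphi : \vecphi(\vecx)\indep D\mid Y\}$ respectively, and $X_C$ will be shown to lie in neither, the corollary follows immediately.

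To make the dashed-edge correlation between $\epsilon_o$ and $D$ precise, I would reify it as a latent common cause $U$ with $U\to \epsilon_o$ and $U\to D$, and consider the path
\[
X_C \;\leftarrow\; O \;\leftarrow\; \epsilon_o \;\leftarrow\; U \;\to\; D.
\]
On this path, $O$ is a chain node ($\epsilon_o\to O\to X_C$), hence a non-collider; $\epsilon_o$ is a chain; and $U$ is a source. None of these nodes is observed, so the path is unblocked by the empty conditioning set, giving $X_C\not\indep D$. Consequently the domain-invariant feasibility set excludes $\vecphi=X_C$.

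For the class-conditional case I would re-examine the same path under conditioning set $\{Y\}$. Since $Y$ does not lie on the path, it cannot block any non-collider on it; and because the path contains no colliders at all, conditioning on the descendant $Y$ of $X_C$ also cannot re-open any previously blocked segment. The path therefore remains active, yielding $X_C\not\indep D\mid Y$, which in turn excludes $X_C$ from the class-conditional feasibility set.

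The main obstacle is the precise semantics of the dashed edge and the translation from d-connection to statistical dependence. The former is handled by the latent-common-cause reification above; an alternative bidirected-edge formulation would work equally well, because $O$ remains a non-collider on the path in either representation. The latter is handled by the perfect-map assumption explicitly invoked by the paper, which guarantees that a d-connecting path corresponds to an actual dependency in the population distribution rather than a measure-zero cancellation among the non-parametric functions $g_o,g_{xc},g_{xa},g_x,h$. With both ingredients in place the argument reduces to a short path trace, and the exclusion from each feasibility set is immediate.
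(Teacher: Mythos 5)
Your proposal is correct and follows essentially the same route as the paper's proof: both apply d-separation to the SCM, identify the unblocked path from $X_C$ to $D$ through the object $O$ (arising from the $O$--$D$ correlation), and observe that conditioning on $Y$ does not block it, so $X_C \not\indep D$ and $X_C \not\indep D \mid Y$. Your treatment is somewhat more explicit than the paper's --- reifying the dashed edge as a latent common cause and invoking the perfect-map assumption to pass from d-connection to genuine dependence --- but the underlying argument is the same.
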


\section{MatchDG: Matching without objects}
When object information is available, Eq. \eqref{eq:erm-pmatch-loss} provides a loss objective to build a classifer using causal features. However, object information is not always available,  
and in many datasets there may not be a
perfect ``counterfactual'' match based on same object across domains.  
Therefore, we propose a  two-phase, iterative contrastive learning method to approximate object matches. 

The object-invariant condition from Section~\ref{sec:objcond} can be interpreted as matching pairs of inputs from different domains that share the same $X_C$. To approximate it, our goal is to learn a matching $\Omega:\mathcal{X} \times\mathcal{X}\rightarrow \{0,1\}$ such that pairs having  $\Omega(\vecx, \vecx')=1$ have low difference in $\vecxc$ and $\vecx'_c$. We make the following assumption.

\begin{assumption}
Let $(\vecx_i^{(d)}, y)$, $(\vecx_j^{(d')},y)$ be any two points that belong to the same class, and let  $(\vecx_k^{(d)} y')$ be any other point that has a different class label. Then the distance in causal features  between $\vecx_i$ and $\vecx_j$ is smaller than that between $\vecx_i$ and $\vecx_k$ or $\vecx_j$ and $\vecx_k$: $\operatorname{dist}(x_{c,i}^{(d)},
    x_{c,j}^{(d')}) \leq \operatorname{dist}(x_{c,i}^{(d)},
    x_{c,k}^{(d')})$ and $\operatorname{dist}(x_{c,j}^{(d)},
    x_{c,i}^{(d')}) \leq \operatorname{dist}(x_{c,j}^{(d)},
    x_{c,k}^{(d')})$. 
\end{assumption}

\subsection{Two-phase method with iterative matches}
\label{sec:two-phase}

To learn a  matching function $\Omega$, we use unsupervised contrastive learning from ~\cite{chen2020simclr,he2019moco} and adapt it to construct an iterative \mdg\ algorithm that updates the both the representation and matches after each epoch. The algorithm relies on the property that two inputs from the same class have more similar causal features than inputs from different classes. 

\xhdr{Contrastive Loss.} To find matches, we optimize a contrastive representation learning loss that minimizes distance between  same-class  inputs from different domains in comparison to inputs from different classes across  domains. 
Adapting the contrastive loss for a single domain~\cite{chen2020simclr}, we consider \emph{positive} matches as two inputs with the same class but different domains, and \emph{negative} matches as pairs with  different classes. 
For every positive match pair $(\vecx_j, \vecx_k)$, we propose a loss where $\tau$ is a hyperparameter, $B$ is the batch size, and 
$\operatorname{sim}(\mathbf{a},\mathbf{b})=\vecphi(\vecx_a)^T \vecphi(\vecx_b) /\norm{\vecphi(\vecx_a)} \norm{\vecphi(\vecx_b)}$  
is the cosine similarity.
\begin{equation} \label{eq:ntloss}
  \small
  l(\vecx_j,\vecx_k) = -\log \frac{e^{\operatorname{sim}(j, k)/\tau}}{ e^{\operatorname{sim}(j, k)/\tau} + \sum_{i=0, y_i\neq y_j}^B e^{\operatorname{sim}(j, i)/\tau}}
\end{equation}

\xhdr{Iterative matching.} Our key insight is to update the positive matches during training. We start training with a random set of positive matches based on the classes, but after every $t$ epochs, we update the positive matches based on the nearest same-class pairs in representation space and iterate until convergence. Hence for each anchor point, starting with an initial set of positive matches, in each epoch a representation is learnt using contrastive learning; after which the positive matches are themselves updated based on the closest same-class data points across domains in the representation. As a result, the method differentiates between data points of the same class instead of treating all of them as a single unit. With iterative updates to the positive matches, the aim is to account for intra-class variance across domains and match data points across domains that are more likely to share the same base object. In Suppl.~\ref{sm:iterative-ctr}, we compare the gains due to the proposed iterative matching versus standard contrastive training.

Obtaining the final representation completes Phase I of the algorithm. 
In Phase II, we use this representation to compute a new match function based on closest same-class pairs and apply Eq. \eqref{eq:erm-pmatch-loss} to obtain a classifier regularized on those matches.


\begin{algorithm}[t]
\footnotesize
\begin{algorithmic}
	\STATE {{\bf In:} Dataset $(d_i,x_i,y_i)^n_{i=1}$ from m domains, $\tau$, t}
	\STATE{{\bf Out:} Function $f:\mathcal{X} \to \mathcal{Y}$}

	\STATE{Create random match pairs $\Omega_Y$.} 
	\STATE{Build a $p*q$ data matrix $\mathcal{M}$.} 
	
    \STATE{ \textbf{Phase I}}
  	\WHILE{notconverged} 
		\FOR{$batch \sim \mathcal{M}$}
   			\STATE {Minimize contrastive loss \eqref{eq:ntloss}.}
		\ENDFOR
		\IF{epoch \% t == 0}
		 	\STATE{Update match pairs using $\vecphi_{epoch}$.}
		\ENDIF
   \ENDWHILE
   
    \STATE{ \textbf{Phase II}}
    \STATE{Compute matching based on $\vecphi$.}
    \STATE{Minimize the loss \eqref{eq:erm-pmatch-loss} with learnt match function $\vecphi$ to obtain $f$.}
\end{algorithmic}
\caption{MatchDG}
\label{alg:matchdg}
\end{algorithm}

\xhdr{The importance of using two phases.} We implement \mdg\ as a 2-phase method, unlike previous methods~\cite{motiian2017ccsa,dou2019masf} that employed class-based contrastive loss as a regularizer with ERM. This is to avoid the classification loss interfering with the goal of learning an invariant representation across domains (e.g., in datasets where one of the domains has many more samples than others). Therefore, we first learn the match function using only the contrastive loss. 
Our results in Suppl.~\ref{sm:metrics-phase2} show that the two-phase method provides better overlap with ground-truth perfect matches than optimizing classification and matching simultaneously. 


To implement \mdg\, we build a $p \times q$ data matrix containing $q-1$ positive matches for each input and then sample mini-batches from this matrix. The last layer of the
contrastive loss network is considered as the learnt representation (see Algorithm~\ref{alg:matchdg}; details are in Suppl.~\ref{app:impl}).

\subsection{MDG Hybrid}
While \mdg\ assumes no information about objects, it can be easily augmented to incorporate information about known objects. For example, in computer vision, a standard practice is to augment data by performing rotations, horizontal flips, color jitter, etc. These self-augmentations provide us with access to known objects, which can included as perfect-matches in \mdg\ Phase-II by adding another regularizer to the loss from Eq \ref{eq:erm-pmatch-loss}. We name this method \hybrid\ and evaluate it alongside \mdg\ for datasets where we can perform self augmentations.    

\begin{table*}[t]
\caption{Accuracy for Rotated MNIST \& Fashion-MNIST datasets on target domains of $0^{\circ}$ and $90^{\circ}$. Accuracy for CSD~\cite{piratla2020efficient}, MASF~\cite{dou2019masf}, IRM~\cite{arjovsky2019irm} are reproduced from their code. Results for the other versions of Rotated MNIST with all test angles (LetNet~\cite{motiian2017ccsa}, DomainBed~\cite{gulrajani2020domainbed}) are in Suppl.~\ref{sm:2-layer}, \ref{sm:mnist-domain-bed}.}
\centering
\footnotesize

\begin{tabular}{@{}l | p{1.3cm} |c@{}| c @{}|c@{}| c@{}| c@{}| c @{}|c @{}}
\toprule
Dataset                                                                            & \begin{tabular}[c]{@{}l@{}}Source \end{tabular}      & ERM  & MASF & CSD & IRM & \begin{tabular}[c]{@{}l@{}}\texttt{RandMatch} \end{tabular}  & {\begin{tabular}[c]{@{}l@{}}\mdg \end{tabular}} & \begin{tabular}[c]{@{}l@{}}\texttt{PerfMatch (Oracle)} \end{tabular} \\ \midrule
\multirow{3}{*}{\begin{tabular}[c]{@{}l@{}}Rotated\\ MNIST\end{tabular}}           & \begin{tabular}[c]{@{}l@{}}15, 30, 45, \\ 60, 75\end{tabular} & 93.0 (0.11)  & 93.2  (0.2)  & 94.5  (0.35) & 92.8 (0.53) & 93.4 (0.26)   & {\bf 95.1} (0.25) & 96.0 (0.41)  \\ \cmidrule {2-8}
                                                                                 & 30, 45, 60                                                    & 76.2 (1.27)  & 69.4 (1.32) & 77.7 (1.88) & 75.7 (1.11) & 78.3 (0.55)  &  { \bf 83.6 } (1.44) &  89.7 (1.68)   \\ \cmidrule {2-8}
                                                                                   & 30, 45                                                        & 59.7 (1.75) &  60.8 (1.53)   &  62.0 (1.31) & 59.5 (2.61)  & 63.8 (3.92)     & {\bf 69.7} (1.30) & 80.4 (1.79)  \\ 
                                                                                    \midrule 
\multirow{3}{*}{\begin{tabular}[c]{@{}l@{}}Rotated\\ Fashion\\ MNIST\end{tabular}} 

& \begin{tabular}[c]{@{}l@{}}15, 30, 45, \\  60, 75\end{tabular} 
&  77.9 (0.13)      &  72.4  (2.9)    & 78.7 (0.38) &  77.8  (0.02) & 77.0 (0.42) & \textbf{80.9} (0.26)  & 81.6 (0.46)   \\ \cmidrule {2-8}
                                                                                & 30, 45, 60                                                     & 36.1 (1.91)  & 29.7 (1.73)      & 36.3 (2.65) & 37.8 (1.85) & 38.4 (2.73) & {\bf 43.8} (1.33) & 54.0 (2.79)   \\ \cmidrule {2-8}
                                                                                   & 30, 45                                                        & 26.1 (1.10)     &  22.8 (1.26)   &   24.2 (1.69) & 26.6 (1.06)  & 26.9 (0.34)          & {\bf 33.0} (0.72) &    41.8 (1.78)   \\ \bottomrule 
\end{tabular}
\label{tbl:mnist}
\end{table*}

\begin{table}[th!]
\caption{Overlap with perfect matches. top-10 overlap and the mean rank for perfect matches for \mdg\ and ERM over all training domains. Lower is better for mean rank.}
\centering
\footnotesize
 \resizebox{\linewidth}{!}{
\begin{tabular}{@{}l | l |c c c @{}}
\toprule
Dataset                                                                            & \begin{tabular}[c]{@{}c@{}}Method \end{tabular} & \begin{tabular}[c]{@{}c@{}}Overlap (\%)\end{tabular}  &  \begin{tabular}[c]{@{}c@{}}Top 10 \\Overlap (\%) \end{tabular} &  Mean Rank
\\ \midrule
\multirow{3}{*}{\begin{tabular}[c]{@{}l@{}} MNIST\end{tabular}}           & \begin{tabular}[c]{@{}l@{}}
                                        ERM\end{tabular}  & 15.8 (0.42)  & 48.8 (0.78)  & 27.4 (0.89)   \\ 
                                                                                &\begin{tabular}[c]{@{}c@{}}\mdg \\ (Default) \end{tabular}                                                    & {\bf 28.9} (1.24) & {\bf 64.2} (2.42)  & {\bf 18.6} (1.59) \\
                                                                                   & \begin{tabular}[l]{@{}l@{}}\mdg \\ (PerfMatch) \end{tabular}                                                         & 47.4 (2.25)  & 83.8 (1.46)  & 6.2 (0.61)   \\ 
                                                                                    \midrule
\multirow{3}{*}{\begin{tabular}[c]{@{}l@{}}Fashion\\ MNIST\end{tabular}}           & \begin{tabular}[c]{@{}l@{}}
                                         ERM\end{tabular}  & 2.1 (0.12)  & 11.1 (0.63)   & 224.3 (8.73)  \\ 
                                                                                   
                                                                                              & \begin{tabular}[c]{@{}c@{}}\mdg \\ (Default) \end{tabular}                                                      & {\bf 17.9} (0.62) &  {\bf 43.1} (0.83) & {\bf 89.0} (3.15)   \\\cmidrule {2-5}
                                                                                   & \begin{tabular}[l]{@{}l@{}}\mdg \\ (PerfMatch) \end{tabular}                                                         & 56.2 (1.79)  & 87.2 (1.48)  &  7.3 (1.18)    \\                              \bottomrule 
\end{tabular}}
\label{tbl:metrics}
\end{table}

\section{Evaluation}
\label{sec:eval}
We evaluate out-of-domain accuracy of \mdg\ on  two simulated benchmarks by \citet{piratla2020efficient}, Rotated MNIST and Fashion-MNIST, on PACS dataset \cite{li2017pacs}, and  on a novel Chest X-rays dataset. In addition, using the simulated datasets, we inspect the quality of matches learnt by \mdg\ by comparing them to ground-truth object-based matches. For PACS and Chest X-rays, we also implement \hybrid\ that uses augmentations commonly done while training neural networks. We compare to 
1) ERM: Standard empirical risk minimization,
2) \ermatch\ that implements the loss from Eq. \eqref{eq:erm-pmatch-loss} but with randomly selected matches from the same class~\cite{motiian2017ccsa}, 
3) other state-of-the-art methods for each dataset. 
For all matching-based methods, we use the cross-entropy loss for $L_d$ and $\ell_2$ distance for $\operatorname{dist}$ in Eq.\eqref{eq:erm-pmatch-loss}. Details of implementation and the datasets are in Suppl.~\ref{app:impl}.  All the numbers are averaged over $3$ runs with standard deviation in brackets.

\textbf{Rotated MNIST \& Fashion-MNIST.} The datasets contain rotations of  grayscale MNIST handwritten digits and fashion article images from $0^{\circ}$ to $90^{\circ}$ with an interval of $15^{\circ}$~\cite{ghifary2015multitaskae}, where each rotation angle represents a domain and the  task is to predict the class label. Since  different domains' images are generated from the same base image (object), there exist perfect matches across domains. Following CSD, we report accuracy on $0^{\circ}$ and $90^{\circ}$ together as the test domain and the rest as the train domains; since these test angles, being extreme, are the hardest to generalize to (standard setting results are in  Suppl.~\ref{sm:2-layer}, \ref{sm:mnist-domain-bed}).  

\textbf{PACS.} This dataset contains total  $9991$ images from four domains: Photos (P), Art painting (A), Cartoon (C) and Sketch (S). The task is to classify objects over $7$ classes. Following~\cite{ dou2019masf}, we train $4$ models with each domain as the target  using Resnet-18, Resnet-50 and Alexnet. 

\textbf{Chest X-rays.} We introduce a harder real-world dataset based on Chest X-ray images from three different sources: NIH~\cite{wang2017chestx}, ChexPert~\cite{irvin2019chexpert} and RSNA~\cite{rsna}. The task is to detect whether the image corresponds to a patient  with Pneumonia (1) or not (0). To create spurious correlation, all images of class 0 in the training domains are translated vertically downwards; while no such translation is done for the test domain.

\xhdr{Model Selection.} While using a validation set from the test domain may improve classification accuracy, it goes  against the problem motivation of generalization to unseen domains. Hence, we use only data from source domains to construct a validation set (except when explicitly mentioned in Table 4, to compare to past methods that use test domain validation). 

\subsection{Rotated MNIST and Fashion MNIST} Table~\ref{tbl:mnist} shows classification accuracy on \rotm\ and \rotfm\ for test domains $0^{\circ}$ \& $90^{\circ}$ using  Resnet-18 model. On both datasets, \mdg\ {\em outperforms} all baselines. The last column shows the accuracy for an oracle method, \epmatch\  that has access to ground-truth perfect matches across domains. \mdg's accuracy lies between \ermatch\ and \epmatch, indicating the benefit of learning a matching function. As the number of training domains decrease, the gap between \mdg\ and baselines is highlighted: with 3 source domains for \rotfm, \mdg\ achieves accuracy of $43.8\%$ whereas the next best method \ermatch\ achieves $38.4\%$.

We also evaluate on a simpler 2-layer LeNet~\cite{motiian2017ccsa}, and the model from  \cite{gulrajani2020domainbed} to compare \mdg\ to  prior works ~\cite{ilse2020diva, ganin2016dann, shankar2018generalizing, goodfellow2014explaining}; the results are in Suppl.~\ref{sm:2-layer}, \ref{sm:mnist-domain-bed}. 
    
\xhdr{Why \mdg\ works?}
We compare the matches returned by \mdg\ Phase I (on Resnet-18 network) to the ground-truth perfect matches and find that it has significantly higher overlap than matching based on ERM loss (Table~\ref{tbl:metrics}). We report three metrics on the representation learnt: 
percentage of \mdg\ matches that are  perfect matches, \%-age of inputs for which the perfect match is within the top-10 ranked \mdg\ matches, and mean rank of perfect matches measured by distance over the \mdg\ representation. 

On all three metrics, \mdg\ finds a representation whose matches are more consistent with ground-truth perfect matches. For both \rotm\ and \rotfm\ datasets, about 50\% of the inputs have their perfect match within top-10 ranked matches based on the representation learnt by \mdg\ Phase I. About 25\% of all matches learnt by \mdg\ are perfect matches. For comparison, we also show metrics for an (oracle) MatchDG method that is initialized with perfect matches: it achieves better overall and Top-10 values. 
Similar results for \mdg\ Phase 2 are in Suppl.~\ref{sm:metrics-phase2}. Mean rank for \rotfm\ may be higher because of the larger sample size $10,000$ per domain; 
metrics for training with $2000$ samples are in  Suppl.~\ref{sm:fmnist-lower}. 
To see how the overlap with perfect matches affects accuracy, we simulate random matches with 25\%, 50\% and 75\% overlap with perfect matches (Suppl. Tbl.~\ref{sm:frac-matches}). Accuracy increases with the  fraction of perfect matches, indicating the importance of  capturing good matches.

\begin{figure}[tb]
\centering
\subfigure[][MatchDG Penalty during training]{
   \includegraphics[scale=0.2]{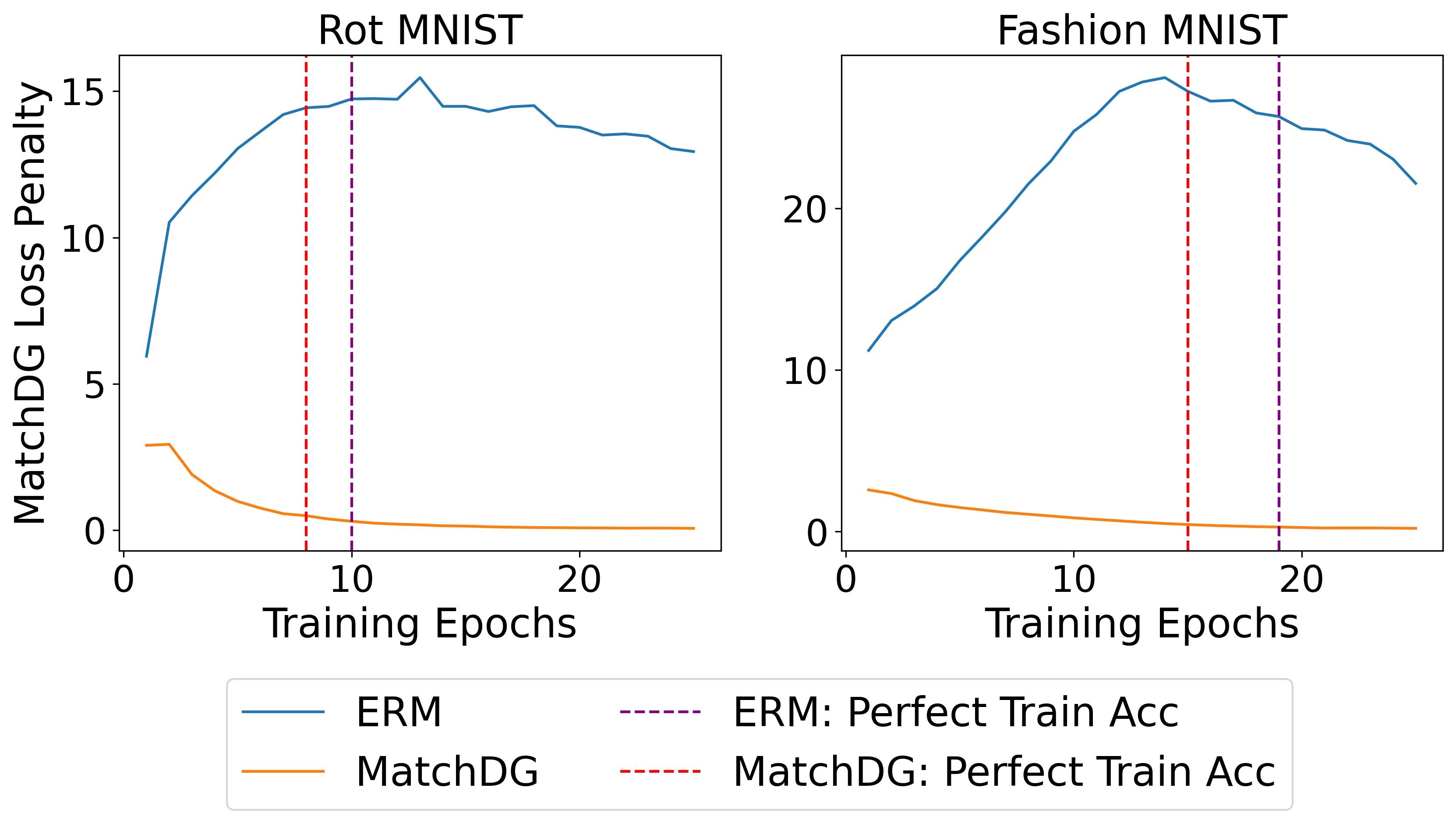} 
   \label{fig:matchdg-plot}
 }
 
\subfigure[][IRM Penalty during training]{
    \includegraphics[scale=0.2]{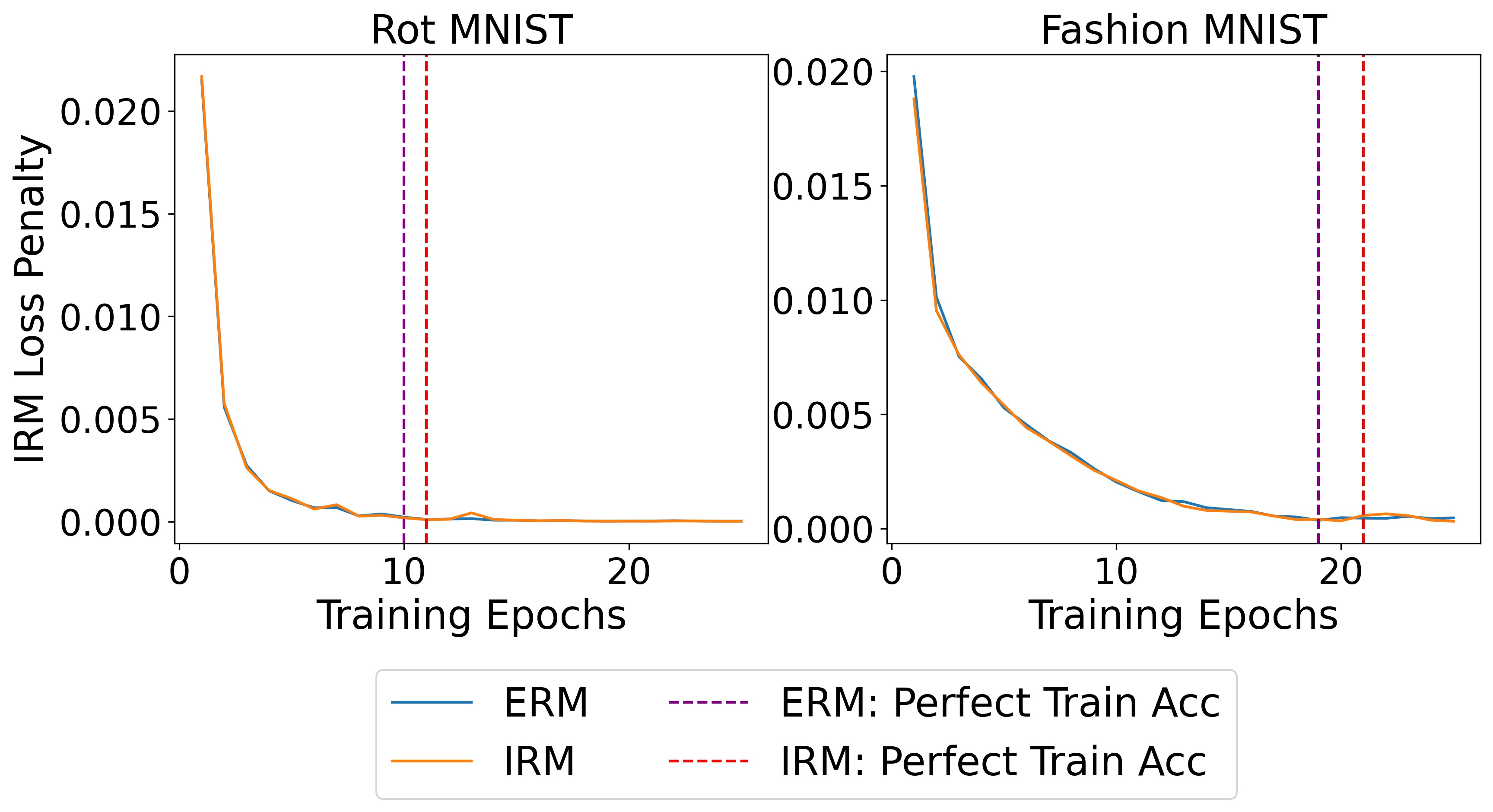}
   \label{fig:irm-plot}
 }
 
 \caption{MatchDG regularization penalty is not trivially minimized even as the training error goes to zero.}
\label{fig:loss-plots}
\end{figure}

\xhdr{MatchDG vs. IRM on zero training error.} Since neural networks often achieve zero training error, we also evaluate the effectiveness of the \mdg\ regularization under this regime. Fig.~\ref{fig:loss-plots} shows the matching loss term as training proceeds for \rotm\ and \rotfm. Even after the model achieves zero training error, we see that plain ERM objective is unable to minimize the matching loss (and thus MatchDG penalty is needed). This is because MatchDG regularization depends on comparing the (last layer) representations, and zero training error does not mean that the  representations  within each class are the same.  In contrast, regularizations that are based on comparing loss between training domains such as the IRM penalty can be satisfied by plain ERM as the training error goes to zero (Fig.~\ref{fig:irm-plot}); similar to Fig. (5) from \cite{krueger2020out} where  ERM can minimize IRM penalty on Colored MNIST.

\begin{table}[tb]
\centering
\caption{Accuracy on PACS with ResNet 18 (default),  and Resnet 18 with test domain validation. The results for 
JiGen~\cite{carlucci2019domain}, 
DDAIG~\cite{zhou2020deep},
SagNet~\cite{nam2019reducing},
DDEC~\cite{asadi2019shapedg}, were taken from the DomainBed~\cite{gulrajani2020domainbed} paper. For G2DM~\cite{albuquerque2020generalizing}, CSD~\cite{piratla2020efficient}, RSC~\cite{huang2020self} it was taken from the respective paper. Extensive comparison with other works and std. dev. in results is in Supp~\ref{sm:resnet}.
}
\label{tbl:pacs-r18}
\footnotesize
\begin{tabular}{@{}l@{}|p{0.9cm} @{}|p{0.9cm}@{}|p{0.9cm}@{}|p{0.9cm}@{}|c@{}}
\toprule
          & P            & A         & C        & S            & Average.  \\ \midrule
ERM       & 95.38 & 77.68  & 78.98 & 74.75 & 81.70 \\
JiGen     & 96.0        & 79.42        & 75.25        & 71.35        & 80.41 \\
G2DM       & 93.75        & 77.78        & 75.54    & 77.58        & 81.16 \\
CSD       & 94.1     & 78.9       & 75.8       & 76.7      & 81.4 
\\
DDAIG     & 95.30     &  \textbf{84.20}        & 78.10         & 74.70         &  83.10 \\  
SagNet     & 95.47      &  83.58        & 77.66         & 76.30         &  83.25 \\  
DDEC       & \textbf{96.93}        & 83.01        & 79.39        & 78.62       & 84.46 \\ 
RSC       & 95.99        & 83.43        & 80.31        & \textbf{80.85}        & \textbf{85.15} \\ 
\texttt{RandMatch} & 95.37 & 78.16  &  78.83  & 75.13 &   81.87 \\
\texttt{MatchDG}   & 95.93  & 79.77  & 80.03  & 77.11 &  83.21 \\ 
\texttt{MDGHybrid}   & 96.15 & 81.71   & \textbf{80.75} & 78.79  & 84.35 \\

\midrule

G2DM    (Test)   & 94.63        & 81.44        & 79.35        & 79.52        & 83.34 \\
\texttt{RandMatch} (Test) & 95.57 & 79.09 &  79.37  & 77.60  &  82.91  \\  
\texttt{MatchDG} (Test)   & 96.53 &  81.32  & 80.70 & 79.72   & 84.56   \\ 
\texttt{MDGHybrid} (Test)   & \textbf{96.67}  & \textbf{82.80}   &  \textbf{81.61} & \textbf{81.05}  & \textbf{85.53} \\ 
\bottomrule
\end{tabular}
\end{table}

\begin{table}[tb]
\centering
\caption{Accuracy on PACS with architecture ResNet 50. The results for IRM~\cite{arjovsky2019irm}, 
CORAL~\cite{sun2016deep},  were taken from the DomainBed~\cite{gulrajani2020domainbed} paper. The result for RSC~\cite{huang2020self} was taken from their paper. Comparison with other works in Supp~\ref{sm:resnet}.
}
\label{tbl:pacs-r50}
\footnotesize
 \resizebox{\linewidth}{!}{
\begin{tabular}{@{}l@{}|p{0.9cm} @{}|p{0.9cm}@{}|p{0.9cm}@{}|p{0.9cm}@{}|c@{}}
\toprule
          & P            & A         & C        & S            & Average.  \\ \midrule
DomainBed    (ResNet50)   & 97.8        & \textbf{88.1}        & 77.9         & 79.1        &  85.7 \\
IRM    (ResNet50)   & 96.7       &  85.0    & 77.6      & 78.5 &  84.4 \\
CORAL    (ResNet50)   & 97.6  &  87.7   & 79.2      & 79.4  &  86.0 \\
RSC    (ResNet50)   & 97.92  &  87.89  & 82.16  & \textbf{83.35}   &  \textbf{87.83} \\

\texttt{RandMatch} (ResNet50) & 97.89 & 82.16 &  81.68  & 80.45 & 85.54  \\
\texttt{MatchDG} (ResNet50)   & 97.94  & 85.61 & 82.12  &  78.76  & 86.11  \\  

\texttt{MDGHybrid} (ResNet50)   & \textbf{98.36} & 86.74  & \textbf{82.32} & 82.66 & 87.52 \\  

\bottomrule
\end{tabular}
}
\end{table}

\subsection{PACS dataset}
\xhdr{ResNet-18.} On the PACS dataset with ResNet-18 architecture (Table~\ref{tbl:pacs-r18}), our methods are competitive to state-of-the-art results averaged over all domains. The  \hybrid\ has the highest average accuracy across domains, except compared to DDEC and RSC. These works do not disclose their model selection strategy (whether the results are using source or test domain validation). Therefore, we also report results of \mdg\ and \hybrid\ using test domain validation, where \hybrid\ obtains comparable results to the best-performing method. In addition, with DDEC~\cite{asadi2019shapedg}, it is not a fair comparison since they use additional style transfer data from Behance BAM! dataset during training. 


\xhdr{ResNet-50.} We implement \mdg\  on Resnet50 model (Table~\ref{tbl:pacs-r50}) used by the ERM in DomainBed. Adding \mdg\  loss regularization improves the accuracy of DomainBed, from 85.7 to 87.5 with \texttt{MDGHybrid}. Also, \hybrid\ performs better than the prior approaches using Resnet50 architecture, except RSC~\cite{huang2020self}, whose results (87.83) are close to ours (87.52). Note that we chose a subset of the best-performing baselines for Table~\ref{tbl:pacs-r18}, \ref{tbl:pacs-r50}; an extensive comparison with other works is in Suppl.~\ref{sm:resnet}. Suppl.~\ref{sm:alexnet} gives the results using AlexNet network, and a t-SNE plot (Figure~\ref{fig:t-sne}) to show the quality of representation learnt by MatchDG.



\begin{table}[tb]
\caption{Chest X-Rays data. As an upper bound, training ERM on the target domain itself yields 73.8\%, 66.5\%, and 59.9\% accuracy for RSNA, ChexPert, and NIH respectively.}
\centering
\footnotesize
\begin{tabular}{@{}l | c | c |c@{}}
\toprule
& RSNA & ChexPert & NIH   \\ \midrule

ERM      & 55.1 (2.93) & 60.9 (0.51) & 53.4 (1.36)                   \\ 
IRM     &  57.0 (0.75) & 63.3 (0.25) & 54.6 (0.88)                                      \\
CSD     & 58.6 (1.63)  & \textbf{64.4 (0.88)} & 54.7 (0.13)                                            \\
\texttt{RandMatch}  & 56.3 (3.38) & 55.3 (2.25) & 53.1 (0.13)                                               \\ 
\texttt{MatchDG}   & 58.2 (1.25) & 59.0 (0.25) & 53.2 (0.65)                                            \\
\texttt{MDGHybrid} & \textbf{64.3} (0.75) & 60.6 (0.25)  & \textbf{57.6} (0.13)                                             \\

\bottomrule
\end{tabular}
\label{tbl:chestxray}
\end{table}

\subsection{Chest X-rays dataset} Table~\ref{tbl:chestxray} provides results for the  Chest X-rays dataset, where the spurious correlation of vertical translation with the class label in source domains may lead the models to learn an unstable relationship. With RSNA as the target domain, ERM obtains 79.8\%, 81.8\% accuracy on the source domains while its accuracy drops to 55.1\% for the target domain. In contrast, \hybrid\ obtain the highest classification accuracy (8 \% above ERM), followed by CSD and \mdg; while methods like ERM and IRM are more susceptible to spurious correlation. However, on ChexPert as the target domain, CSD and IRM do better than ERM while matching-based methods are not effective. We conjecture these varying trends might be due to the inherent variability in images in the source domains, indicating the challenges of building domain generalization methods for real-world datasets.

\section{Conclusion}
We presented a causal view of domain generalization that provides an object-conditional objective. Simple matching-based methods perform competitively to state-of-the-art methods on PACS, indicating the importance of choosing the right invariance. The proposed MatchDG uses certain assumptions when objects are unknown. More work needs to be done to develop better matching methods, as indicated by the mixed results on the Chest-Xrays dataset. 


\xhdr{Acknowledgements.} 
We would like to thank Adith Swaminathan, Aditya Nori, Emre Kiciman, Praneeth Netrapalli, Tobias Schnabel, Vineeth Balasubramanian and the reviewers who provided us valuable feedback on this work. We also thank Vihari Piratla who helped us with reproducing the CSD method and other baselines.



\clearpage
\bibliography{example_paper}
\bibliographystyle{icml2021}

\clearpage
\appendix


\section{Synthetic Data (Slab Dataset and Simple Counter-example)}

\begin{table*}[h]
\footnotesize
\caption{Hyper parameter selection details for the slab dataset. We mention the Optimal Value for each hyper parameter and the Range used for grid search. We leave the optimal value for Epochs as blank since we do early stopping based on the validation loss, with the total number of epochs for each model as 100.}
\centering
\begin{tabular}{@{}l | l |c c @{}}
\toprule
Method                                                                            & \begin{tabular}[c]{@{}c@{}} Hyper Parameter \end{tabular} & \begin{tabular}[c]{@{}c@{}}Optimal Value\end{tabular}  &  \begin{tabular}[c]{@{}c@{}}Range\end{tabular} 
\\ 

\midrule

\multirow{3}{*}{\begin{tabular}[c]{@{}l@{}} DANN \end{tabular}}    

& \begin{tabular}[c]{@{}c@{}} Lambda \end{tabular}                                                      &  0.01 &  [0.01, 0.1, 1.0, 10.0, 100.0]   \\

& \begin{tabular}[c]{@{}c@{}} Gradient Penalty \end{tabular}                                                      &  0.1 &  [0.01, 0.1, 1.0, 10.0]   \\

& \begin{tabular}[c]{@{}c@{}} Discriminator Steps \end{tabular}                                                      &  4 &  [1, 2, 4, 8]   \\

\midrule

\multirow{3}{*}{\begin{tabular}[c]{@{}l@{}} CDANN \end{tabular}}    

& \begin{tabular}[c]{@{}c@{}} Lambda \end{tabular}                                                      &  0.01 &  [0.01, 0.1, 1.0, 10.0, 100.0]   \\

& \begin{tabular}[c]{@{}c@{}} Gradient Penalty \end{tabular}                                                      &  1.0 &  [0.01, 0.1, 1.0, 10.0]   \\

& \begin{tabular}[c]{@{}c@{}} Discriminator Steps \end{tabular}                                                      &  2 &  [1, 2, 4, 8]   \\

\midrule

\multirow{1}{*}{\begin{tabular}[c]{@{}l@{}} MMD \end{tabular}}    

& \begin{tabular}[c]{@{}c@{}} Lambda \end{tabular}                                                      &  0.1 &  [0.1, 1.0, 10.0]   \\

\midrule

\multirow{1}{*}{\begin{tabular}[c]{@{}l@{}} C-MMD \end{tabular}}    

& \begin{tabular}[c]{@{}c@{}} Lambda \end{tabular}                                                      &  0.1 &  [0.1, 1.0, 10.0]   \\

\midrule

\multirow{1}{*}{\begin{tabular}[c]{@{}l@{}} CORAL \end{tabular}}    

& \begin{tabular}[c]{@{}c@{}} Lambda \end{tabular}                                                      &  0.1 &  [0.1, 1.0, 10.0]   \\

\midrule

\multirow{1}{*}{\begin{tabular}[c]{@{}l@{}} C-CORAL \end{tabular}}    

& \begin{tabular}[c]{@{}c@{}} Lambda \end{tabular}                                                      &  0.1 &  [0.1, 1.0, 10.0]   \\

\midrule

\multirow{1}{*}{\begin{tabular}[c]{@{}l@{}} RandMatch \end{tabular}}    

& \begin{tabular}[c]{@{}c@{}} Lambda \end{tabular}                                                      &  1.0 &  [0.1, 1.0, 10.0]   \\

\midrule

\multirow{1}{*}{\begin{tabular}[c]{@{}l@{}} PerfMatch \end{tabular}}    

& \begin{tabular}[c]{@{}c@{}} Lambda \end{tabular}                                                      &  1.0 &  [0.1, 1.0, 10.0]   \\
\bottomrule 
\end{tabular}
\label{tbl:hyperparam-details-slab}
\end{table*}

\subsection{Implementation Details for the Slab Dataset}
\label{app:syn-implement}

\paragraph{Dataset}

The synthetic slab dataset (Section 3.2) consists of a binary label $y$ and 2-dimensional features; one feature has a linear relationship with $y$ while the other has a more complex ``slab'' relationship with $y$. The features vary in their simplicity, a measure of the simplicity of the feature is given by the number of linear pieces in the optimal classification/decision curve (Figure 1,~\cite{shah2020pitfalls}). Hence, the linear features are simpler as they only have 1 linear piece in the optimal decision boundary, as opposed to the slab features that have $k$ linear pieces in the piecewise linear optimal decision boundary. 

The synthetic slab dataset was introduced for detecting simplicity bias in neural networks~\cite{shah2020pitfalls}, to demonstrate that neural networks trained with SGD learn the simpler linear feature as opposed to the slab feature. We extend this dataset for the domain generalization (DG) task, by making the linear block features spurious due to domain-dependent noise addition as described below. The effect of the slab feature on $y$ remains the same across domains. Presence of the spurious linear feature should enable an ideal DG algorithm to differentiate it from the stable slab feature, and break the simplicity bias in neural networks. However, the invariance introduced by different DG methods can have a big impact. Using this dataset,  we show that the class-conditional distribution matching constraint is not sufficient (section ~\ref{sec:slab-dataset}); it is possible to satisfy the constraint using the  spurious linear feature too.

The linear block features contain the positive (y=1) and the negative (y=0) labels sampled from uniform distributions $U(0.1, 1.0)$ and $U(-1.0, -0.1)$ respectively. To make the linear features spurious, we add noise to the linear features s.t. data points are sampled from $U(-0.1, 0.1)$ with probability p, and sampled from $U(0.1, 1.0)$ (y=1) or $U(-1.0, -0.1)$ (y=0) with probability $1-p$. The $k$-slab feature ranges from [-1, 1] and within it has different ``slabs'' corresponding to uniform distributions of the feature's value, conditioned on class label $y$. The labels for these slabs alternate between  positive (y=1) and negative (y=0)  as the numeric value of the slab feature increases. The length for each slab is given by $\frac{2 - m*(k-1)}{k}$ where $k$ is the total number of slabs, and $m$ is the margin between two slabs. 

\textit{Linear Block Feature:}

\[
p_{l}(x|y=0)= 
\begin{cases}
    U(-0.1, 0.1) & \text{with prob.} \quad p\\
    U(-1.0, -0.1) & \text{with prob.} \quad 1-p\\
\end{cases}
\]

\[
p_{l}(x|y=1)= 
\begin{cases}
    U(-0.1, 0.1) & \text{with prob.}\quad p\\
    U(0.1, 1.0) & \text{with prob.}\quad 1-p\\
\end{cases}
\]

\textit{Slab Block Feature:}

Let $k$ be the total number of slabs, and $m$ be the margin between two slabs.

Slab length: $L= \frac{2 - m*(k-1)}{k}$ 

Start index: $I(i)= -1 + i*L$
\[
p_{s}(x|y=0)= 
\begin{cases}
    U( I(i), I(i) + L) & \text{for i} \in \{0, 2, 4, ...\}\\
\end{cases}
\]

\[
p_{s}(x|y=1)= 
\begin{cases}
    U( I(i), I(i) + L) & \text{for i} \in \{1, 3, 5, ...\}\\
\end{cases}
\]

We also a constant (domain independent) noise to the relationship between the slab feature and the class label by flipping the original label with probability $p_{s}$. 

\paragraph{Source and Target Domains} We generate two source data domains with noise probabilities $p$ as $0.0$ and $0.1$, and generate the target domain with complete noise $p=1.0$, rendering the linear block feature not informative of the label in the target domain. However, the slab block features have a stable relationship with the labels across the multiple source and target domains. We choose $k=7$, $m=0.1$, and $p_{s}=0.1$ for the slab block features in our experiments. We sample $1k$ data points per domain, which leads to $2k$ training data points (source domain with $p$ as $0.0$ and $0.1$), and $1k$ test data points (target domain with $p$ as $1.0$). Also, for hyperparameter tuning (model selection), we sample additional 250 data points per source domain as the validation set.

\paragraph{Model Architecture}
The overall architecture consists of a representation network along with a classification network, detailed below. Input Dim refers to the input data dimension, which is 2 dimensional (linear block feature, slab block feature). Num Classes refers to the total number of output classes, which is binary classification for the synthetic slab dataset. We refer a fully connected dense layer by FC layer, with the input and output dimensions for that layer in brackets. 

\textit{Representation Network}
\begin{itemize}
    \item FC layer: (Input Dim, 100)
    \item ReLU activation
\end{itemize}

\textit{Classification Network}
\begin{itemize}
    \item FC layer: (100, 100)
    \item FC layer: (100, Num Classes)
\end{itemize}

For methods like DANN~\cite{ganin2016dann}, and CDAAN~\cite{li2018adversarialcada}, which also require domain discriminators, we use the same architecture for them as that of the classification network.

\paragraph{Methods}

We use Cross-Entropy for the classification loss in ERM and all the other methods. The regularization penalty of all the methods is placed on the output of the representation network.

For the methods DANN~\cite{ganin2016dann}, CDANN~\cite{li2018adversarialcada}, MMD~\cite{li2018adversarialfeature}, CORAL~\cite{sun2016deep}, we used their implementation available in DomainBed~\cite{gulrajani2020domainbed}. We extended the implementation of MMD and CORAL from DomainBed to their class conditional versions, C-MMD, and C-CORAL. The extension to class conditional version was done by computing their respective penalty over domains conditioned on a particular class label.

For RandMatch~\cite{motiian2017ccsa} and PerfMatch, we use $l_{2}$ distance for $\operatorname{dist}$ in (Eq: ~\ref{eq:erm-pmatch-loss}). The match function $\Omega$ in the RandMatch algorithm is defined as randomly matching any two data points across domains with the same class label. For the PerfMatch algorithm, the match function $\Omega$ accepts two data points across domains with the same slab id as valid matches (the slab id corresponds to the value of the causal feature: two inputs with the same slab id have similar causal features).

\paragraph{Note on method selection}
Our objective with the synthetic slab dataset is to compare the performance of conditional distribution matching (CDM) methods to that of Perfect Match. We chose the above mentioned CDM methods for experimentation since the other CDM methods~\cite{,li2018conddomaingen, ghifary2016scatter, hu2019discriminant} mentioned in the related works (Section 2, main paper) did not have their implementation publicly available. We found the implementation of CDANN~\cite{li2018adversarialcada} in the DomainBed~\cite{gulrajani2020domainbed} repository, which also provided implementation for the unconditional distribution matching methods like MMD~\cite{li2018adversarialfeature}, and CORAL~\cite{sun2016deep}. Hence, we extended MMD and CORAL to their class-conditional variant using their original implementation from the DomainBed repository.

\paragraph{Hyperparameter Tuning}

All the methods were trained using SGD, with batch size 128, learning rate 0.1 and weight decay 5e-4. We train each method for 100 epochs and do early stopping based on the validation loss.

Further details regarding the tuning of hyperparameters specific to each method's regularization technique are provided in Table~\ref{tbl:hyperparam-details-slab}. The loss objective of all the methods can be written as ERM + $\lambda$*Regularization Penalty; and we provide the optimal values and grid range for the hyperparam $\lambda$ in Table~\ref{tbl:hyperparam-details-slab}. Also, some methods like DANN, C-DANN have additional hyperparams, which are specified in the same table. The grid search range for methods that were implemented using DomainBed~\cite{gulrajani2020domainbed} is taken from the Table 8 in the DomainBed paper.

\subsection{Simple counter-example and its relationship to the MatchDG assumption.}
The \mdg\ method depends on Assumption 1 (Section 5) which requires that same-class inputs across domains are closer in causal features than different-class inputs. Note that the example in Section~\ref{sec:simple-syn} does not satisfy this assumption. However, there exist many variations of the setup that do follow the MatchDG assumption, and still class-conditional methods cannot recover the true causal feature. For instance, by setting $|x'_c| = |x_c|+\kappa$ where $\kappa > 1.5$ and $\alpha_1=\kappa+1, \alpha_2=\kappa+2$ for domain 1 and domain 2 respectively, the train domains satisfy the MatchDG assumption.

Overall, the goal of the simple example in Section~\ref{sec:simple-syn} is to show that there exist datasets where class-conditional methods would not work, but Perfect-Match does. MatchDG's assumption  works in a subset of these datasets.  In future work,  matching-based methods can be developed that relax the MatchDG assumption.

\section{Theory and Proofs}
\subsection{Constructing the causal graph}
\label{app:graphdetails}
When considering classification tasks, there are two viewpoints on whether the features cause the class label, or whether the class labels cause the features. \cite{gong2016domainadapt,magliacane2018domainadaptcausal,rojas2018invarianttheory} assume a generative process where the true class label determines the features in the observed data. In contrast, \cite{peters2016causal,arjovsky2019irm} consider a generative process where the features are used to assign a label, e.g., when manually labelling a set of images. We believe that both mechanisms are possible, depending on the context. In particular, it is plausible that the true class label $Y_{true}$ causes the features, but it is not observed. Instead, what is observed is the output of a manual labelling process, where the  features are used to  label each input with its class $Y$~\cite{arjovsky2019irm}.  

Given these differences, we  construct a causal graph (Figure \ref{fig:scm-all}) that includes both $Y_{true}$ and $Y$ (as in \cite{heinze2017conditional}), and is consistent with both viewpoints about the direction of the causal mechanism. Importantly, all d-separation results reported in the main text hold true irrespective of whether we choose $Y$ or $Y_{true}$  as the class label. We use $Y$ as the label in the main text, since it corresponds to many settings where the observed class label is a result of a (possibly noisy) manual labelling process. 

In addition, we chose to represent $X_C$ and $X_A$ as near-to-final features, that are combined using a simple operation to generate the observed features $X$. Under this representation, the object $O$ does cause $X_A$; $X_A$ is produced by combination of the domain and the object. Another equally valid construction is to assume that $X_A$ contains only the domain information, and a more complex operation generates the observed features using $X_C$ (object information) and $X_A$. The corresponding causal graph will omit the edge from object $O$ to $X_A$. 
Both these graphs are allowed by our framework. All d-separation results reported in the main text hold true irrespective of whether there exists an edge from $O$ to $X_A$.

\subsection{D-separation}
\label{app:d-sep}
We first expand on the d-separation definition, providing a few examples that illustrate conditional independence implications of specific graph structures in \figref{fig:dsep-examples}. We use these three conditions for the proofs below. 


\begin{figure}[h]
    \centering
    
    \subfigure[][Chain: $A\not \indep B; A \indep B| C$]{
        \includegraphics[scale=0.25]{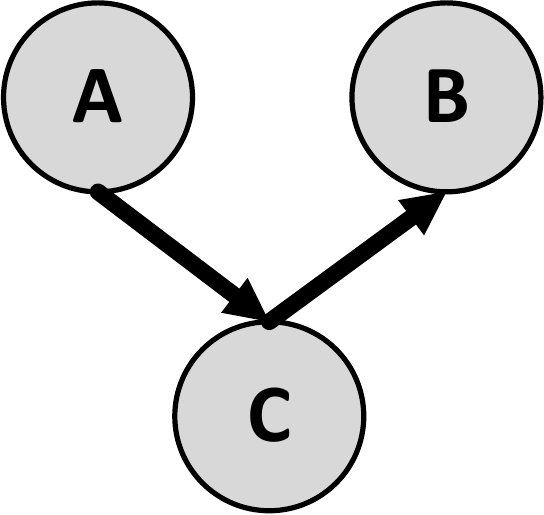}
        }
        
    \subfigure[][Fork:  $A\not \indep B; A \indep B| C$]{
        \includegraphics[scale=0.25]{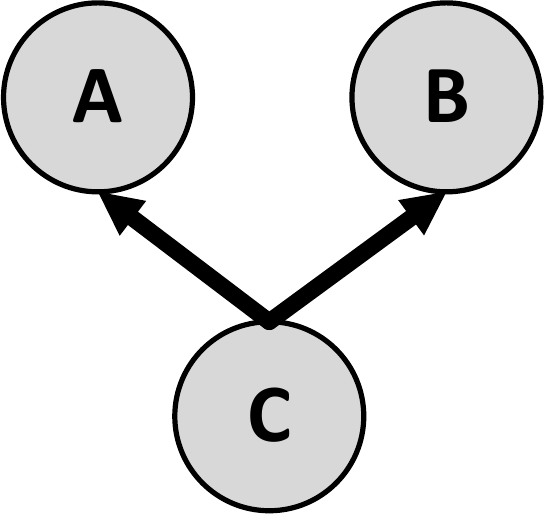}
        }
        
    \subfigure[][Collider: $A \indep B; A \not \indep B |C$]{
        \includegraphics[scale=0.25]{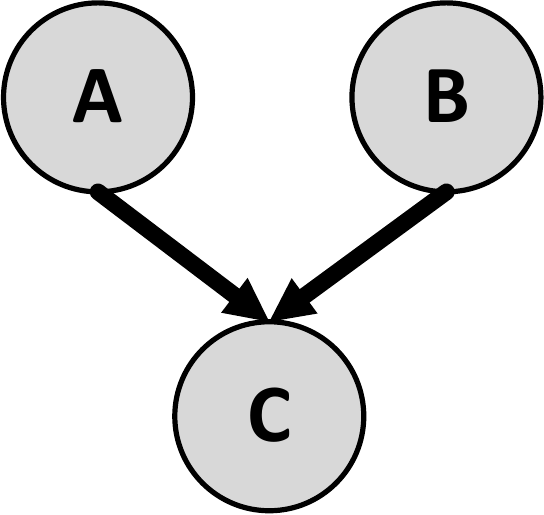}
        }
        
    \caption{Causal graphs with the node $C$ as a chain, fork, or a collider. By the d-separation criteria, $A$ and $B$ are conditionally independent given $C$ in (a) and (b). In (c) however, $A$ and $B$  are independent but become conditionally dependent given $C$.} 
    \label{fig:dsep-examples}
\end{figure}

\begin{definition}
\label{def:dsep}
    \textbf{d-separation} \cite{pearl2009book}: Let A,B,C be the three non-intersecting subsets of nodes in a causal graph $\mathcal{G}$. For any path between two nodes, a collider is a node where arrows of the path meet head-to-head. A path from $A$ to $B$ is said to be \emph{blocked} by $C$ if either a non-collider on the path is in $C$,  or there is a collider on the path and neither the collider nor its descendants are in $C$. 
    
    If all paths from $A$ to $B$ are blocked, then A is d-separated from B by C: $\operatorname{dsep}(A, B, C)  \Rightarrow A \indep B | C$. 
\end{definition}

\subsection{Proof of Proposition 1}
\label{app:proofprop1}

Proposition 1 relates to the domain generalization setup, as described in Section~\ref{sec:simple-syn}, where the causal feature determines  $y$ label without any noise. The distribution of the non-causal feature varies across domains. The proof uses the entropy formulation of distribution-matching methods, as done by \citet{akuzawa2019adversarial}. 
\classcondprop*

\begin{proof}
We can write class-conditional invariant models as optimizing two objectives: minimize the error on the training data (ERM objective), and learn a representation $\Phi(\vecx)$ that is independent of domain given the class label (class-conditional invariant).

Let us focus on the second objective, which  be interpreted as 
2) 
maximizing the entropy of domain given class label and representation $H(d|y,\Phi(\vecx))$. 
Let $\Phi_{2}$ be the optimal representation for the class-conditional invariant. We can write, 
\begin{equation}
    \Phi_{2} = \arg \max_\Phi H(d|y, \Phi(\vecx))
\end{equation}

Since $H(d|y, \Phi(\vecx)) \leq H(d|y)$ using the property of entropy, the optimal $\Phi_2$ satisfies, 
\begin{equation}
\label{eq:cdm}
    H(d|y, \Phi_2(\vecx)) = H(d|y)
\end{equation}

Now two cases arise; $x_c \indep D| Y $ or $x_c \not \indep D| Y$. We assume the former.  
If $X_C$ is independent of domain conditioned on the class label, then
\begin{equation}
\begin{split}
    H(d|y)&=H(d|y, x_c)
\end{split}
\end{equation}
Here domain $d$ is independent of both $x_c$ and $\Phi_2(\vecx)$, conditional on $y$. Since causal features $x_c$ cannot be caused by the representation $\Phi_2(\vecx)$, it cannot be a collider (Definition~\ref{def:dsep}) in any graph connecting $d$, $x_c$ and $\Phi_2(\vecx)$. Therefore, conditioning on it does not remove the independence between $d$  and $\Phi_2(\vecx)|y$ (conditioned on $y$). Hence, we condition on Eq~\ref{eq:cdm} with $x_c$ and obtain, 
\begin{equation}
\label{eq:cdm|x_c}
    H(d|y, x_c)=H(d|y, x_c, \Phi_2(\vecx)) 
\end{equation}

 Plugging it into the above equations, we obtain 
\begin{equation}
\begin{split}
    H(d|y) &=    H(d|y, x_c)
  =  H(d| y, x_c, \Phi_2(\vecx))
  \end{split}
\end{equation} 

Also since there is no label noise,  $x_c$ can achieve zero error for predicting the label $y$. That is, $x_c$ contains all information about $y$, and thus we can remove $y$ from the above equation, 
\begin{equation}
    H(d|x_c) = H(d|\Phi_2(\vecx), x_c)
\end{equation}

This implies that the learnt representation $\Phi_2(\vecx)$ is independent of the domain given $x_c$; thus $\Phi_2(\vecx)$ depends on $x_c$ and not on any other feature that changes with domain. 
\end{proof}

\subsubsection{Remarks based on Proposition 1}
\textbf{If $x_c$ is not independent of domain given class label. }However, if $X_C$ is not independent of domain given the class label (i.e.,  $P(x_c|y)$ changes across domains), then $H(d|y) > H(d|y, x_c)$. Using the equality from Eq~\ref{eq:cdm|x_c}, we obtain, 
\begin{equation}
\begin{split}
    H(d|y) = H(d|y, \Phi_2(\vecx)) &>   H(d|y, x_c) \\
    H(d|y, \Phi_2(\vecx)) &\geq H(d|\Phi_2(\vecx), y, x_c)
\end{split}
\end{equation} 
 After removing $y$ as in Eq. 10, $H(d| x_c)$ and $H(d|\Phi_2, x_c)$ may not be equal. In particular, the ground-truth representation $\Phi_{GT}(\vecx)=x_c$ does not satisfy the class-conditional invariant: $H(d|y, \Phi_{GT}(\vecx)) =H(d|y, x_c) \neq H(d|y)$. 
 
 Hence, to learn $x_c$ as the representation,  we need a separate constraint, $H(d| x_c)=H(d|\Phi, x_c)$; i.e. domain and representation should be independent conditioned on $x_c$. 

\xhdr{Implications for the slab dataset (Section~\ref{sec:slab-dataset}).} In the slab dataset, $x_c=x_2$ and $x_2$ is independent of the domain given the class label ($X_2 \indep D|Y$). We also see that $\Phi(\vecx)=x_2$ satisfies the class-conditional invariant: $H(d|y, x_2)=H(d|y)$ since $X_2 \indep D|Y$. By Proposition 1, the class-conditional invariant should lead to a representation that satisfies $H(d|x_c)=H(d|\Phi(\vecx), x_c)$. However, the same constraint can also be achieved by setting $\Phi(\vecx)=x_1$ by shifting the distribution of $x_1$ slightly. And since there is a simple, linear correlation between $x_1$ and the class label $y$, empirically class-conditional methods end up learning a representation dependent on $x_1$.



\subsection{Proof of Proposition 2}
\label{app:proofprop2}

\identifyprop*

\begin{proof}
To prove non-identifiability, it is sufficient to show a counter-example where the same structural equations (and hence same observed and interventional distributions over $Y, X, D, O$)  correspond to two different values of $X_C$.

From Section~\ref{sec:dgp}, the SCM leads to the following structural equations,
\begin{align*}
     o &:= g_o (y_{true}, \eps_o, \eps_{od}) &  & \vecx_c: = g_{xc}(o)  \\
     \vecx_a &:= g_{xa}(d,o, \eps_{xa}) & \vecx := g_x(\vecx_c, \vecx_a, \eps_x) & y := h(\vecx_c, \eps_y) 
\end{align*}

Substituting for $\vecx_c$ in the SCM equations, we obtain, 
\begin{equation}
    \begin{split}
    y &=h(g_{xc}(o), \eps_y)\\ \vecx &=g_x(g_{xc}(o), g_{xa}(d,o,\eps_{xa}), \eps_x)\\
    \end{split}
\end{equation} 

Given a value of object variable $o$, note that $g_{xc}$ determines $x_c$. We now proceed to show that different values of $g_{xc}$ are possible given the same structural equations between the observed variables $Y, X, D, O$.  Specifically, by choosing $g_x$ and $h$ appropriately, different values of $g_{xc}$  can lead to the same observed values for $(y,d,o,x)$.  

\noindent \textbf{A simple counter-example.} Suppose the following SCM equations, 
\begin{equation}
\begin{split}
    y &= h(g_{xc}(o)) \\
    x &=g_1(g_{xc}(o))+g_2(o,d)
\end{split}
\end{equation}

Introducing $h^*=h\circ g_{xc}$ and $g_1^*=g_1 \circ g_{xc}$, we can rewrite the above equations as, 
\begin{equation}
\begin{split}
    y &=h^*(o) \\
    x &=g^*_1(o)+g_2(o,d)
\end{split}
\end{equation}
then any $g_{xc}$ is applicable as long as we set $h$ such that $h(g_{xc})(o)=h^*(o)$ and set $g_1$ such that $g_1(g_{xc}(o))=g^*_1(o)$. In particular,  if the SCM equations are $y=o$, $x=o+o*d$, and we  define $h=g_1=g_{xc}^{-1}$, then $g_{xc}$ can be any invertible function. 
Hence, different values of $x_c=g_{xc}(o)$ will lead to the same structural equations over $Y, X, D, O$, and therefore the same observed and interventional distributions. 
\end{proof}

\subsection{Proof of Theorem 1}
\label{app:thm1proof}
\perfectmatchtheorem*

\begin{proof}
    \textbf{CLAIM 1. }
    The matching condition can be written as:
    \begin{equation} \label{eq:pmatch-loss-c}
    \small
        C(\vecphi)= \min_{\vecphi} \sum_{d,d'\in
        D_m}\lim_{\substack{n_d\rightarrow \infty\\ n_{d'} \to \infty}}\sum_{\Omega(j,k)=1; d\neq d'} \operatorname{dist}(\vecphi(\vecx_j^{(d)}), \vecphi(\vecx_k^{(d')}))
    \end{equation}
    where $\Omega(j,k)=1$ for pairs of inputs $\vecx_j$ and $\vecx_k$ from two different domains $d$ and $d'$ that correspond to the same object. The distance metric $\operatorname{dist}$ is non-negative, so the optimal $\vecphi$ is when $C(\vecphi)$ is zero. 
As in the SCM from \figref{fig:scm-withnoise},  let $X_c$ represent a feature vector such that it is generated based only on the object $O$ and that it leads to the optimal classifier in ~\eqref{eq:rep-ideal-loss}.  From Sections~\ref{sec:dgp} and \ref{sec:objcond}, we know that $X_c \indep D|O$ and that $x_c=g_{xc}(o)$. Thus,  $x_c$ is the same for inputs from the same object and we can write:
\begin{equation}
        \operatorname{dist}(\vecx_{c,j}^{(d)}, \vecx_{c,k}^{(d')})=0 \text{ \  \ }\forall d,d' \in {D}_m \text{ such that } \Omega(j,k)=1
\end{equation}

Hence,  $\vecphi(\vecx) = \vecx_c$ leads to zero regularizer term and is one of the optimal minimizers for $C(\vecphi)$. 

\textbf{CLAIM 2. }    Further, we show that any other optimal $\vecphi$ is either a function of $\vecxc$ or a constant for all inputs.  We prove by contradiction. 

Let $X_A$ represent the set of unobserved high-level features that are generated based on both the object $O$ and the domain $D$. From the  SCM from \figref{fig:scm-withnoise}, a feature vector $X_a \subseteq X_A$ is independent of $X_c$ given the object, $X_a \indep X_c| O$, and $x_a=g_{xa}(d,o, \eps_{xa})$. Further, let there be an optimal $\vecphi_{a}(\vecx)$ for $C(\vecphi)$ such that it depends on some $X_a \subseteq X_A$ (and is not trivially a constant function). 
Since $\vecphi_{a}$ is optimal, $\vecphi_{a}(\vecx_j^{(d)}) = \vecphi_a(\vecx_k^{(d')})$ for all $d,d'$ such that $\Omega(j,k)=1$, where inputs $\vecx_j$ and $\vecx_k$ correspond to the same object. 

Let us assume that there exists at least one object $o$ for which the effect of domain is stochastic. That is, due to domain-dependent variation, $P(X_a=x_a|D=d, O=o)<1$.  for some $d$ and $o$. Now consider a  pair of inputs $\vecx_l^{(d)}$  and $\vecx_i^{(d')}$ from the same object $o$ such that $\Omega(l,i)=1$, and their corresponding
representations are $\vecphi_a(\vecx_l^{(d)})$ and $\vecphi_a(\vecx_i^{(d')})$. Due to domain-dependent variation,  with non-zero probability, the high-level $X_a$ features are not the same for these two input data points, $x_{a,l}^{(d)} \neq x_{a,i}^{(d')}$. Since $\vecphi$ is a deterministic function of $\vecx$ that is not independent of  $X_a$, if an input $\vecx$ has a different $X_a$, its value of $\vecphi(\vecx)$ will also be different.  Thus, with non-zero probability, we obtain that $\vecphi(\vecx_l^{(d)}) \neq \vecphi(\vecx_i^{(d')})$, unless the effect of $X_a$ is a constant function. Hence, a contradiction and optimal $\vecphi$ cannot depend on any $X_a \subseteq X_A$ that are generated based on the domain.
    
Therefore, an optimal solution to $C(\vecphi)$ can only depend on $X_c$. However, any function of $X_c$ is optimal, including trivial functions like the constant function (that will have low accuracy). Below we show that using the ERM term in \eqref{eq:erm-pmatch-loss} ensures that the optimal solution contains only those functions of $X_C$ that also maximize accuracy.

Using \eqref{eq:erm-pmatch-hardloss}, the empirical optimizer function can be written as (where we scale the loss by a constant $n=\sum_d n_d$, the total number of training data points):
\begin{align} 
\hat{f}_{pmatch} &=  \arg \min_{h, \vecphi}\frac{1}{n} \sum_{d=1}^{m} \lim_{n_d \rightarrow \infty} L_d(h(\vecphi(X)), Y) \\& \texttt{ s.t.} \sum_{\Omega(j,k)=1; d\neq d'} \operatorname{dist}(\vecphi(\vecx_j^{(d)}), \vecphi(\vecx_k^{(d')})) = 0   \label{eq:thm1-proof-fpmatch} \\
                 &=\arg \min_{h,\psi} \frac{1}{n} \sum_{d=1}^{m} \lim_{n_d \rightarrow \infty}L_d(h(\psi(X_c)), Y) \nonumber \\
                 &=\arg \min_{f} \frac{1}{n} \sum_{d=1}^{m} \lim_{n_d \rightarrow \infty}L_d(f(X_c), Y) \label{eq:thm1-proof-simplefpmatch}
\end{align}
where $\psi(X_c)$ denotes all functions of $X_c$ that are optimal for \eqref{eq:pmatch-loss-c}, and the last equality is because $h \circ \psi$ can be written as $f=h\circ \psi$.
Since we assume that $L$ is a P-admissible loss function, its minimizer is the conditional expected value. Thus,  for any domain $d$,  $\arg \min_{f} \lim_{n_d \rightarrow \infty} \frac{1}{n_d} L_d(f(X_c), Y)= \mathbb{E}[Y|X_c, D]$. 
Further, by d-separation, $Y\indep D|X_c$.  Therefore, $\mathbb{E}[Y|X_c, D]=\mathbb{E}[Y|X_c] $. 
The above equation indicates that the loss minimizer function on any domain is independent of the domain. Thus, for the $m$ training domains, we can write:
\begin{equation}\label{eq:thm1-proof-condexp}
\begin{split}
    \arg \min_{f \in \mathcal{F}} \lim_{n_d \rightarrow \infty} \frac{1}{n_d}L_d(f(X_c), Y) 
    &= \arg \min_{f \in \mathcal{F}} \mathbb{E}[l(f(\vecx_c), y)] \\
   &=\mathbb{E}[Y|X_c] 
   \text{ \ \ } \forall d \in D_m
\end{split}
\end{equation}
Now \eqref{eq:thm1-proof-simplefpmatch} can be rewritten as,
\begin{equation}
\begin{split}
\hat{f}_{pmatch}  &=\arg \min_{f} \frac{1}{n} \sum_{d=1}^{m} \lim_{n_d \rightarrow \infty} \frac{L_d(f(X_c), Y)}{n_d} n_d \\
                    &=\arg \min_{f} \sum_{d=1}^{m} \lim_{n_d \rightarrow \infty} \frac{L_d(f(X_c), Y)}{n_d} \frac{n_d}{n}
\end{split}
\end{equation}

From the equation above, the loss for $\hat{f}_{pmatch}$ can be considered as a weighted sum of the average loss on each training domain where the weights are all positive. Since $E[Y|X_c]$ minimizes the average loss on each domain as $n_d \to \infty$, it will also minimize the overall weighted loss for all values of the weights.  Therefore, for any dataset over $m$ domains in $D_m$, $\mathbb{E}[Y|X_c]$ is the optimal function that minimizes the overall loss. 

Moreover, we can also write $f^*$ as:

\begin{equation}\label{eq:rep-ideal-loss-supp}
\begin{split}
    f^* &= \arg \min_{f \in \mathcal{F}} \mathbb{E}_{(d, \vecx,y)}[l(y, f(\vecx))] \\
        &= \arg \min_{h \in \mathcal{F}} \mathbb{E}_{(d, \vecx,y)}[l(y, h(\vecxc))]\\
        &= \arg \min_{h \in \mathcal{F}} \mathbb{E}_{(\vecx,y)}[l(y, h(\vecxc)] 
        = \mathbb{E}[Y|X_c]
\end{split}
    \end{equation}

    where we utilize \eqref{eq:thm1-proof-condexp} and that the loss function is P-admissible. Hence, $f^*=\mathbb{E}[Y|X_c]$ is the loss-miniziming function for the loss in \eqref{eq:thm1-proof-simplefpmatch}. 

    Finally, using a Lagrangian multiplier, minimizing the following soft constraint loss is equivalent  to minimizing \eqref{eq:thm1-proof-fpmatch}, 
    for some value of $\lambda$.
\begin{multline} 
    \hat{f}_{\tt pmatch} = \lim_{\forall d\in D_m n_d\to \infty} \arg \min_{h, \vecphi} \sum_{d=1}^{m} L_d(h(\vecphi(X)), Y) \\ + \lambda \sum_{\Omega(j,k)=1; d\neq d'} \operatorname{dist}(\vecphi(\vecx_j^{(d)}), \vecphi(\vecx_k^{(d')})) 
\end{multline}
The result follows. 
\end{proof}

\textbf{Comment on Theorem 1.} In the case where the effect of a domain is also deterministic, it is possible that  $P(X_a|O,D)=1$ (e.g., in artificially created  domains like Rotated-MNIST where every object is rotated by the \emph{exact} same amount in each domain). In that case Theorem 1 does not apply and it is possible to learn a  representation $\vecphi_a$ that depends on $X_a \subseteq X_A$ and still minimizes $C(\vecphi)$ to attain $C(\vecphi)=0$. 
For example, with two training domains on Rotated-MNIST dataset ($0^{\circ}$, $\alpha^{\circ}$), it is possible to learn a representation that simply memorizes to ``un-rotate '' the $\alpha$ angle back to $0^{\circ}$. Such a representation will fail to generalize to domains with different rotation angles, but nonetheless minimizes $C(\vecphi)$ by attaining the exact same representation for each object. 

In practice, we conjecture that such undesirable $\vecphi_a$ are avoided by model-size regularization during training. As the number of domains increase, it may be simpler to learn a single transformation (representation) based on $X_c$ (and independent of $X_c$ features like angle) than  learn separate angle-wise transformations for each train domain.  

\subsection{Proof of Proposition 3}
\label{app:proofcorr1}

\xhdr{Domain-invariant representations.} ($\vecphi(\vecx) \indep D$)~\cite{muandet2013domain,li2018adversarialfeature, ganin2016dann}. 
Using d-separation on the SCM from \figref{fig:scm-withnoise}, $X_C \indep D$ is not sufficient since $O$ blocks the path between $X_C$ and $D$. While  ~\cite{zhao2019learninginvariant} argue that this condition fails when $Y$ is correlated with $D$, our analysis shows that  domain-invariant methods require a stronger condition that both class label and actual objects sampled  be independent of domain. 

\xhdr{Class-conditional domain-invariant.} ($\vecphi(\vecx) \indep D|Y$.)~\cite{li2018conddomaingen, ghifary2016scatter, li2018adversarialcada} Even in the ideal case where we observe  $Y_{true}$, d-separation on the SCM reveals  that  $X_C \not \indep D|Y_{true}$ due to a path through $O$. Thus, having the same distribution per class is not consistent with properties of $X_C$. 

Below we prove these results formally. 
\othermethodscor*
\begin{proof}
As in the SCM from \figref{fig:scm-withnoise},  let $X_c$ represent an unobserved high-level feature vector such that it is generated based only on the object $O$ and that it leads to the optimal classifier in ~\eqref{eq:rep-ideal-loss}.  From Sections~\ref{sec:dgp} and \ref{sec:objcond}, we know that $X_c \indep D|O$ and that $x_c=g_{xc}(o)$. 
    Following a similar proof to Theorem~\ref{thm:main-match} (Claim 1), we check whether $\vecphi(\vecx)=\vecx_c$ satisfies the invariance conditions required by the two methods. 
    \begin{enumerate}
        \item \textbf{Domain-invariant}: The required condition for a representation is that $\vecphi_{\tt DI}(\vecx) \indep D$. But using the d-separation criteria on the SCM in \figref{fig:scm-withnoise}, we find that $X_c \not \indep D$ due to a path through Object $O$.
        \item \textbf{Class-conditional domain-invariant}: The required condition for a representation is that $\vecphi_{\tt CDI} \indep D | Y$. However using the d-separation criteria on the SCM, we find that $X_c \not \indep D | Y$ due to a path through Object $O$ that is not blocked by $Y$ (nor by $Y_{true}$ if it is observed).
    \end{enumerate}
Therefore, under the conditions proposed by these methods, $X_c$ or any function of $X_c$ is not an optimal solution without making any additional assumptions. Hence, even with infinite samples, a method optimizing for these conditions will not retrieve $X_c$. 
\end{proof}

\section{Evaluation and implementation details}
In this section we describe implementation details for our proposed methods. We also discuss the evaluation protocol, including details about hyperparameters and cross-validation. 

\subsection{Implementation details}
\label{app:impl}

        For the implementation of \epmatch\ in  Eq. (\ref{eq:erm-pmatch-loss}); we use the cross-entropy loss for $L_{d}$ and $l_{2}$ distance for $\operatorname{dist}$ in Eq. (\ref{eq:erm-pmatch-loss}). Similarly, we implement the \ermatch\ with a match function $\Omega$ in Eq. (\ref{eq:erm-pmatch-loss}) that randomly matches data points across domains with the same class, For both methods, we consider the representation $\vecphi(\vecx)$ to be the last layer of the network. That is, we take $h$ to be identity function in Eq. (\ref{eq:erm-pmatch-loss}) for simplicity.  It is also possible to use the second-last or any other previous layer as a representation, but the last layer performed well in our experiments. Also, given a fixed data point, the match function $\Omega$ could select multiple data points as potential matches for it. In this case we use Eq. (\ref{eq:erm-pmatch-loss}) with stochastic matching, where we randomly select one match out of the potential     multiple matches.

We use SGD to optimize  the loss for all the datasets, with details about learning rate, epochs, batch size, weight decay etc. provided in the section~\ref{sm:hyperparam-tuning} ahead. For all the different methods, we sample batches from the data matrix consisting of data points matched across domains; hence we ensure an equal number of data points from each source domain in a batch. When training with \mdg, the underlying architecture for Phase 2 is kept  the same for ERM, \texttt{RandMatch}, \texttt{PerfMatch} for the respective task; with the details mentioned below for each dataset. The details for the Phase-1 architecture are specified in section~\ref{sm:hyperparam-tuning}, Table~\ref{tbl:mdg-p1}. \\

\xhdr{Rotated MNIST \& Fashion-MNIST.} The datasets contain rotations of  grayscale MNIST handwritten digits and fashion article images from $0^{\circ}$ to $90^{\circ}$ with an interval of $15^{\circ}$~\cite{ghifary2015multitaskae}, where each rotation angle represents a domain and the  task is to predict the class label. For Table~\ref{tbl:mnist}, we follow the setup in CSD~\cite{piratla2020efficient}, we report accuracy on $0^{\circ}$ and $90^{\circ}$ together as the test domain and the rest as the train domains. We use $2,000$ and $10,000$ training samples from each domain for rotated MNIST and Fashion-MNIST, and train models using Resnet-18 architecture (without pre training). We choose this as our primary setup and select $0^{\circ}$ and $90^{\circ}$ as our target domain, since these are known to be the most difficult domains to generalize~\cite{piratla2020efficient, motiian2017ccsa}. 

Further, we also evaluate on other setups of Rotated MNIST in prior works~\cite{motiian2017ccsa, gulrajani2020domainbed}, which involve six domains ($0^{\circ}$, $15^{\circ}$, $30^{\circ}$, $45^{\circ}$, $60^{\circ}$, $75^{\circ}$), and evaluate for each domain being the target domains with the remaining five used as source domains. We sample 1000 data points for each domain and evaluate using the LeNet architecture (Table \ref{tbl:mnist-lenet}) as per the setup proposed by \cite{motiian2017ccsa}.  Similarly, we sample all the 70,000 images in MNIST and evaluate using the custom architecture (Table \ref{tbl:mnist-domain-bed}) as per the setup proposed by \cite{gulrajani2020domainbed}. 

Another important distinction between different setups above is the use of different digits for the source and the target domains (\cite{piratla2020efficient}, \cite{gulrajani2020domainbed}), as opposed to the use of same digits across the source and the target domains in setup of \cite{motiian2017ccsa} which makes the task easier as it leaks information about the target domains.

Finally, for all the different setups proposed above, we create an additional validation set for each domain with 20\% percent size as of the training set for that domain. We use the validation set from the source domains for hyper parameter tuning. \\

 \xhdr{PACS.} This dataset contains total  $9991$ images from four domains: Photos (P), Art painting (A), Cartoon (C) and Sketch (S). The task is to classify objects over $7$ classes. Following~\cite{ dou2019masf}, we train $4$ models with each domain as the target using Resnet-18 (Table \ref{tbl:pacs-r18}), Resnet-50 (Table \ref{tbl:pacs-r50}) and Alexnet (Table \ref{tbl:pacs-alexnet}), with each architecture pre-trained on ImageNet. We also the following data augmentations~\cite{gulrajani2020domainbed} while training: Random Crop, Horizontal Flip, Color Jitter, and Random Gray Scale.
 \\

\xhdr{Chest X-ray.} We use Chest X-rays images from three different sources: NIH~\cite{wang2017chestx}, ChexPert~\cite{irvin2019chexpert} and RSNA~\cite{rsna}. The task is to detect whether the image corresponds to a patient  with Pneumonia (1) or not (0). For ease of interpretation, we balance the data such that there are equal number of images per class in each domain. Since majority of the images in each domain correspond to the class (0), we sample a subset of the images to ensure that there is no class imbalance in each domain. The dataset size for the different splits on each domain are described below:

\begin{itemize}
    \item NIH: Train (800), Validation (200), Test (400)
    \item ChexPert: Train (800), Validation (200), Test (400)
    \item RSNA: Train (800), Validation (200), Test (400)
\end{itemize}

Following prior works~\cite{cohen2020limits}, we use the pre-trained DenseNet-121 architecture for classification. We  use the following data augmentations: Random Crop and Random Horizontal Flip. We further create spurious correlations, all the images of the class 0 in the training domains are translated vertically downwards; while no such translation is done for the test domain. We translate the images in each source domain by a fixed amount, which varies over different source domains (NIH (45), ChexPert (35), RSNA (15)). This leads to a downward shift in the position of lungs in the images for the class 0 as compared to those for class 1, which could lead to models utilizing this spurious relative difference in position of lungs for the classifications task. 

\begin{table*}[th!]
\footnotesize
\caption{Hyper parameter selection details for all the datasets. We mention the Optimal Value for each hyper parameter and the Range used for grid search. We leave the optimal value for Epochs as blank since we do early stopping based on validation loss, with the total number of epochs for model training specified in the Range column. For the dataset PACS, since the optimal values differ for different test domains, we represent them separately in Table ~\ref{tbl:hyper-param-pacs} }
\centering
\resizebox{\textwidth}{!}{
 \begin{tabular}{@{}l | l |c c @{}}
\toprule
Dataset                                                                            & \begin{tabular}[c]{@{}c@{}} Hyper Parameter \end{tabular} & \begin{tabular}[c]{@{}c@{}}Optimal Value\end{tabular}  &  \begin{tabular}[c]{@{}c@{}}Range\end{tabular} 
\\ 

\midrule

\multirow{3}{*}{\begin{tabular}[c]{@{}l@{}}Rotated \& Fashion MNIST \\   Table \ref{tbl:mnist} (ResNet-18) \end{tabular}}    

& \begin{tabular}[c]{@{}c@{}} Total Epochs \end{tabular}                                                      &  - &  25   \\

& \begin{tabular}[c]{@{}c@{}} Learning Rate \end{tabular}                                                      &  0.01 &  [0.01]   \\

& \begin{tabular}[c]{@{}c@{}} Batch Size \end{tabular}                                                      &  16 &  [16]   \\

& \begin{tabular}[c]{@{}c@{}} Weight Decay \end{tabular}                                                      & 0.0005 &  [0.0005]   \\

& \begin{tabular}[c]{@{}c@{}} Match Penalty \end{tabular}                                                      &  0.1 &  [0.1, 1.0]   \\

& \begin{tabular}[c]{@{}c@{}} IRM Penalty \end{tabular}                                                      &  1.0 (RotMNIST); 0.05 (FashionMNIST)  &  [0.05, 0.1, 0.5, 1.0, 5.0]   \\

& \begin{tabular}[c]{@{}c@{}} IRM Threshold \end{tabular}                                                      &  5 (RotMNIST), 0 (FashionMNIST) &  [0, 5, 15, 20]   \\

\midrule

\multirow{3}{*}{\begin{tabular}[c]{@{}l@{}}Rotated MNIST \\   Table \ref{tbl:mnist-lenet} (LeNet) \end{tabular}}    

& \begin{tabular}[c]{@{}c@{}} Total Epochs \end{tabular}                                                      &  - &  100   \\

& \begin{tabular}[c]{@{}c@{}} Learning Rate \end{tabular}                                                      &  0.01 &  [0.01]   \\

& \begin{tabular}[c]{@{}c@{}} Batch Size \end{tabular}                                                      &  16 &  [16]   \\

& \begin{tabular}[c]{@{}c@{}} Weight Decay \end{tabular}                                                      & 0.0005 &  [0.0005]   \\

& \begin{tabular}[c]{@{}c@{}} Match Penalty \end{tabular}                                                      &  1.0 &  [0.1, 1.0]   \\

\midrule

\multirow{3}{*}{\begin{tabular}[c]{@{}l@{}}Rotated MNIST \\   Table \ref{tbl:mnist-domain-bed} (DomainBed) \end{tabular}}    

& \begin{tabular}[c]{@{}c@{}} Total Epochs \end{tabular}                                                      &  - &  25   \\

& \begin{tabular}[c]{@{}c@{}} Learning Rate \end{tabular}                                                      &  0.01 &  [0.01]   \\

& \begin{tabular}[c]{@{}c@{}} Batch Size \end{tabular}                                                      &  128 &  [16, 32, 64, 128]   \\

& \begin{tabular}[c]{@{}c@{}} Weight Decay \end{tabular}                                                      & 0.0005 &  [0.0005]   \\

& \begin{tabular}[c]{@{}c@{}} Match Penalty \end{tabular}                                                      &  1.0 &  [0.1, 1.0]   \\

\midrule

\multirow{3}{*}{\begin{tabular}[c]{@{}l@{}}PACS \\   Table \ref{tbl:pacs-full}, \ref{tbl:pacs-alexnet} \\ (ResNet-18, ResNet-50, AlexNet) \end{tabular}}    

& \begin{tabular}[c]{@{}c@{}} Total Epochs \end{tabular}                                                      &  - &  50   \\

& \begin{tabular}[c]{@{}c@{}} Learning Rate \end{tabular}                                                      &  Table \ref{tbl:hyper-param-pacs} &  [0.01, 0.001, 0.0005]   \\

& \begin{tabular}[c]{@{}c@{}} Batch Size \end{tabular}                                                      &  16 &  [16]   \\

& \begin{tabular}[c]{@{}c@{}} Weight Decay \end{tabular}                                                      & 0.0005 &  [0.0005]   \\

& \begin{tabular}[c]{@{}c@{}} Match Penalty \end{tabular}                                                      &  Table \ref{tbl:hyper-param-pacs} &  [0.01, 0.1, 0.5, 1.0, 5.0]   \\

\midrule

\multirow{3}{*}{\begin{tabular}[c]{@{}l@{}} Chest X-ray \\   Table \ref{tbl:chestxray} (DenseNet-121) \end{tabular}}    

& \begin{tabular}[c]{@{}c@{}} Total Epochs \end{tabular}                                                      &  - &  40   \\

& \begin{tabular}[c]{@{}c@{}} Learning Rate \end{tabular}                                                      &  0.001 &  [0.01, 0.001]   \\

& \begin{tabular}[c]{@{}c@{}} Batch Size \end{tabular}                                                      &  16 &  [16]   \\

& \begin{tabular}[c]{@{}c@{}} Weight Decay \end{tabular}                                                      & 0.0005 &  [0.0005]   \\

& \begin{tabular}[c]{@{}c@{}} Match Penalty \end{tabular}                                                      &  10.0 (\texttt{RandMatch}), 50.0 (\texttt{MatchDG}, \texttt{MDGHybrid}) &  [0.1, 1.0, 10.0, 50.0]   \\

& \begin{tabular}[c]{@{}c@{}} IRM Penalty \end{tabular}                                                      &  10.0  &  [0.1, 1.0, 10.0, 50.0]   \\

& \begin{tabular}[c]{@{}c@{}} IRM Threshold \end{tabular}                                                      &  5  &  [0, 5, 15, 20]   \\

\bottomrule 
\end{tabular}}
\label{tbl:hyperparam-details}
\end{table*}

\begin{table*}[t]
\footnotesize
\caption{\texttt{MatchDG} Phase 1 training details for all the datasets.We did not do hyper parameter tuning as we did for other methods, hence we mention the default value for each hyper parameter that we used. Please note we still did early stopping, the Total Epochs in the table reflects the max budget for training. The specific archiecture used for Phase 1 training is also mentioned for each dataset. }
\centering
\begin{tabular}{@{}l | l |c @{}}
\toprule
Dataset                                                                            & \begin{tabular}[c]{@{}c@{}} Hyper Parameter \end{tabular} & \begin{tabular}[c]{@{}c@{}}Default Value\end{tabular}  
\\ 

\midrule

\multirow{6}{*}{\begin{tabular}[c]{@{}l@{}}Rotated \& Fashion MNIST \\   Table \ref{tbl:mnist}, \ref{tbl:mnist-lenet},  \ref{tbl:mnist-domain-bed} \end{tabular}}    

& \begin{tabular}[c]{@{}c@{}} Total Epochs \end{tabular}                                                      &  50 \\

& \begin{tabular}[c]{@{}c@{}} Learning Rate \end{tabular}                                                      &  0.01   \\

& \begin{tabular}[c]{@{}c@{}} Batch Size \end{tabular}                                                      &  64 (Table \ref{tbl:mnist}), 512 (Table \ref{tbl:mnist-lenet},  Table \ref{tbl:mnist-domain-bed})  \\

& \begin{tabular}[c]{@{}c@{}} Weight Decay \end{tabular}                                                      & 0.0005   \\

& \begin{tabular}[c]{@{}c@{}} $\tau$ \end{tabular}                                                      &  0.05    \\

& \begin{tabular}[c]{@{}c@{}} Architecture \end{tabular}                                                      &  ResNet-18 (Table~\ref{tbl:mnist}), LeNet (Table~\ref{tbl:mnist-lenet}), Custom CNN (Table~\ref{tbl:mnist-domain-bed})   \\

\midrule

\multirow{6}{*}{\begin{tabular}[c]{@{}l@{}}PACS \\   Table \ref{tbl:pacs-full}, \ref{tbl:pacs-alexnet}  \end{tabular}}    

& \begin{tabular}[c]{@{}c@{}} Total Epochs \end{tabular}                                                      &  50 \\

& \begin{tabular}[c]{@{}c@{}} Learning Rate \end{tabular}                                                      &  0.01   \\

& \begin{tabular}[c]{@{}c@{}} Batch Size \end{tabular}                                                      &  32 \\

& \begin{tabular}[c]{@{}c@{}} Weight Decay \end{tabular}                                                      & 0.0005   \\

& \begin{tabular}[c]{@{}c@{}} $\tau$ \end{tabular}                                                      &  0.05    \\

& \begin{tabular}[c]{@{}c@{}} Architecture \end{tabular}                                                      &  ResNet-50   \\

\midrule

\multirow{6}{*}{\begin{tabular}[c]{@{}l@{}} Chest X-ray \\   Table \ref{tbl:chestxray} \end{tabular}}    

& \begin{tabular}[c]{@{}c@{}} Total Epochs \end{tabular}                                                      &  50 \\

& \begin{tabular}[c]{@{}c@{}} Learning Rate \end{tabular}                                                      &  0.01   \\

& \begin{tabular}[c]{@{}c@{}} Batch Size \end{tabular}                                                      &  32 \\

& \begin{tabular}[c]{@{}c@{}} Weight Decay \end{tabular}                                                      & 0.0005   \\

& \begin{tabular}[c]{@{}c@{}} $\tau$ \end{tabular}                                                      &  0.05    \\

& \begin{tabular}[c]{@{}c@{}} Architecture \end{tabular}                                                      &  DenseNet-121   \\

\bottomrule 
\end{tabular}
\label{tbl:mdg-p1}
\end{table*}

\subsubsection{MatchDG implementation details:}  The MatchDG algorithm proceeds in two phases.  \\
\textbf{Initialization}: 
We construct matches of pairs of same-class data points from different domains. Hence, given each data point we randomly select another data point with the same class from  another domain. The matching for each class across domains is done relative to a base domain; which is chosen by taking the domain that has the highest number of samples for that class. This is done to avoid missing out on data points when there is class imbalance across domains. Specifically, we iterate over classes and for each class, we match data points randomly across domains w.r.t a base domain for that class. This leads to matrix $\mathcal{M}$ of size $(N^{'}, K)$, where $N^{'}$ refers to the updated domain size ( sum of the size of base domain for all the classes ) and K refers to the total number of domains. We describe the two phases below: \\
    
\textbf{Phase 1}: We samples batches $(B,K)$ from the matched data matrix $\mathcal{M}$,  where B is the batch size. For each data point $x_{i}$ in the batch, we minimize the contrastive loss from ~\eqref{eq:ntloss} by selecting its matched data points across domains as the positive matches and consider every data point with a different class label from $x_{i}$ to be a negative match.
         
After every $t$ epochs, we periodically update the matched data matrix by using the representations learnt by contrastive loss minimization. We follow the same procedure of selecting a base domain for each class, but instead of randomly matching  data points across domains, we find the nearest neighbour for the data point in base domain among the data points in the other domains with the same class label based on the $l_2$ distance between their representations. 
  
At the end of Phase I,  we update the matched data matrix based on $l_2$ distance over the final representations learnt. We call these matches as the \emph{inferred} matches.  \\

\textbf{Phase 2}:
           We train using the loss from Eq. \eqref{eq:erm-pmatch-loss}, with the match function $\Omega$ based on the inferred matches generated from  Phase 1 (ERM + Inferred Match). We train the network from scratch in  Phase 2 and use the representations learnt in Phase 1 to only update the matched data matrix. 
           
           The updated data matrix based on representations learnt in Phase 1 may lead to many-to-one matches from the base domain to the other domains. This can lead to certain data points being excluded from the training batches. 
           Therefore, we construct batches such that each batch consists of two parts.
           The first is sampled as in Phase 1 from the matched data matrix. The second part is sampled randomly from  all train domains. 
Specifically, for each batch $(B, K)$ sampled from the matched data matrix, we sample an additional part of size $B$ with data points selected randomly across domains. The loss for the second part of the batch is simply ERM, 
along with ERM + InferredMatch Loss on the first part of the  batch.

\subsection{Metrics for evaluating quality of learnt matches} 
\label{sm:eval-metrics}
Here we describe the three metrics used for measuring overlap of the learnt matches with ground-truth ``perfect'' matches. 

\texttt{Overlap \%}: Percentage  of matches (j, k) as per the perfect match strategy $\Omega$ that are also consistent with the learnt match strategy $\Omega^{'}$.

\begin{equation}
\footnotesize
     \frac{\sum_{\Omega(j,k)=1; d\neq d'} \Omega^{'}(j,k)}{\sum_{\Omega(j,k)=1; d\neq d'} 1}
\end{equation}

\texttt{Top-10  Overlap \%}: Percentage of matches (j, k) as per the perfect match strategy $\Omega$ that are among the Top-10 matches for the data point j w.r.t the learnt match strategy $\Omega^{'}$ i.e. $S^{10}_{\Omega^{'}}(j)$
\begin{equation} 
\footnotesize
     \frac{\sum_{\Omega(j,k)=1; d\neq d'}  \mathbbm{1}[ k \in S^{10}_{\Omega^{'}}(j) ] }{\sum_{\Omega(j,k)=1; d\neq d'} 1}
\end{equation}    

\texttt{Mean Rank}: For the matches (j, k) as per the perfect match strategy $\Omega$, compute the mean rank for the data point j w.r.t the learnt match strategy $\Omega^{'}$ i.e. $S_{\Omega^{'}}(j)$

\begin{equation}
\footnotesize
     \frac{\sum_{\Omega(j,k)=1; d\neq d'}  Rank[ k \in S_{\Omega^{'}}(j) ] }{\sum_{\Omega(j,k)=1; d\neq d'} 1}
\end{equation}    

\subsection{HyperParameter Tuning}
\label{sm:hyperparam-tuning}
To select hyperparameters, prior works~\cite{dou2019masf, carlucci2019domain, li2018learningmetadg} use leave-one-domain-out validation, which means that the hyperparameters are tuned after looking at data from the unseen domain. Such a setup is violates the premise of the domain generalization task that assumes that a model should have no access to the test domain.  Therefore, in this work,  we construct a validation set using only the source domains and use it for hyper parameter tuning. In the case of PACS, we already have access to the validation indices for each domain and use them to construct a validation set based on the source domains. For Rotated \& Fashion MINST, Chest X-ray datasets, we create validation set for each source domain as described in the section B.1 above. Hence, the model does not have access to the data points from the target/test domains at the time of training and validation. 

We perform a grid search over pre-defined values for each hyper parameter and report the optimal values along with the values used for grid search in Table~\ref{tbl:hyperparam-details}. Further, we do early stopping based on the validation accuracy on source domains and use the models which obtain the best validation accuracy. 

For the case of \texttt{MatchDG} Phase-1, we do not perform grid search and use default values for each hyper parameter (Table \ref{tbl:mdg-p1}). We still do early stopping for \texttt{MatchDG} Phase-1, based on the metric \texttt{Top-10 Overlap} (Section B.2) over the validation set of source domains. Since we require perfect matches for the evaluation of the metric \texttt{Top-10 Overlap}, we create prefect matches using the self augmentations (Section B.1) for each dataset.

\begin{table*}[th!]
\footnotesize
\caption{Optimal values for hyper parameters on PACS. Batch Size (16), Weight Decay (0.0005) was consistent across different cases. The Match Penalty for the method \texttt{MDGHybrid} corresponds to (\texttt{MatchDG} penalty, \texttt{PerfMatch} penalty). }
\centering
\begin{tabular}{@{}l | l | l l l l l@{}}
\toprule
Architecture                                                                            & \begin{tabular}[c]{@{}c@{}} Hyper Parameter \end{tabular} & \begin{tabular}[c]{@{}c@{}}Test Domain\end{tabular} &
\begin{tabular}[c]{@{}c@{}}ERM\end{tabular} &
\begin{tabular}[c]{@{}c@{}}\texttt{RandMatch}\end{tabular}  &
\begin{tabular}[c]{@{}c@{}}\mdg (Phase 2) \end{tabular}  &
\begin{tabular}[c]{@{}c@{}}\texttt{MDGHybrid}\end{tabular}  
\\ 

\midrule

\multirow{8}{*}{\begin{tabular}[c]{@{}l@{}}ResNet-18 \\   Table \ref{tbl:pacs-r18}, \ref{tbl:pacs-full} \end{tabular}}    

& \multirow{4}{*}{\begin{tabular}[c]{@{}c@{}} Learning Rate \end{tabular}}
    & Photo & 0.001  & 0.001  & 0.0005 &  0.0005\\

& 
    & Art Painting & 0.01  & 0.01 & 0.001  & 0.001\\

& 
    & Cartoon & 0.01  & 0.001  & 0.001  & 0.001\\

& 
    & Sketch & 0.01  & 0.01  & 0.01   & 0.01\\\cmidrule{2-7}

& \multirow{4}{*}{\begin{tabular}[c]{@{}c@{}} Match Penalty \end{tabular}}       
& Photo & 0 & 5.0  & 1.0 & (0.1, 0.1)  \\

& 
& Art Painting &  0   & 0.1 & 5.0  &  (0.01, 0.1)  \\

& 
& Cartoon & 0   & 5.0 & 1.0   &  (0.1, 0.1)  \\

& 
& Sketch & 0   & 0.5 & 0.5  & (0.01, 0.1)  \\

\midrule

\multirow{8}{*}{\begin{tabular}[c]{@{}l@{}}ResNet-50 \\   Table \ref{tbl:pacs-r50}, \ref{tbl:pacs-full}  \end{tabular}}    

&\multirow{4}{*}{\begin{tabular}[c]{@{}c@{}} Learning Rate \end{tabular}}
    & Photo & 0.0005  & 0.0005  & 0.0005 &  0.0005\\

& 
    & Art Painting & 0.01  & 0.01 & 0.001  & 0.001\\

&
    & Cartoon & 0.01  & 0.01 & 0.001  & 0.0005\\

& 
    & Sketch & 0.01  & 0.01 & 0.0005   & 0.001\\ \cmidrule{2-7}

&\multirow{4}{*}{\begin{tabular}[c]{@{}c@{}} Match Penalty \end{tabular}}                                     & Photo & 0 & 5.0  & 0.01 & (0.1, 0.1)  \\

&
& Art Painting &  0   & 0.1   & 0.1  &  (0.01, 0.1)  \\

& 
& Cartoon & 0   & 0.01  & 0.01   &  (0.01, 0.1) \\

& 
& Sketch & 0   & 0.1  & 5.0  & (0.01, 0.1)    \\

\midrule

\multirow{8}{*}{\begin{tabular}[c]{@{}l@{}}AlexNet \\   Table \ref{tbl:pacs-alexnet}  \end{tabular}}    

& \multirow{4}{*}{\begin{tabular}[c]{@{}c@{}} Learning Rate \end{tabular}}
    & Photo & 0.0005  & 0.0005  & 0.0005 &  0.0005\\

& 
    & Art Painting & 0.001  & 0.001 & 0.001  & 0.001\\

& 
    & Cartoon & 0.001  & 0.001 & 0.001  & 0.001\\

& 
    & Sketch & 0.0005  & 0.001 & 0.001   & 0.001\\ \cmidrule{2-7}

& \multirow{4}{*}{\begin{tabular}[c]{@{}c@{}} Match Penalty \end{tabular}}                                    & Photo & 0 & 0.1  & 0.1 & (0.1, 0.1)  \\ 

& 
& Art Painting &  0   & 0.1   & 1.0  &  (0.01, 0.1)  \\

&
& Cartoon & 0   & 0.5  & 1.0   &  (0.01, 0.1) \\

& 
& Sketch & 0   & 0.5  & 0.1  & (0.01, 0.1)    \\

\bottomrule 
\end{tabular}
\label{tbl:hyper-param-pacs}
\end{table*}

\subsection{Reproducing Results from Prior Work}
\label{sm:prior-work}

\paragraph{MNIST and Fashion MNIST}
The results for MASF, CSD, and IRM in  Table~\ref{tbl:mnist} were computed using their code which is available online~\footnote{\label{csd-fnote}https://github.com/vihari/CSD }\footnote{https://github.com/biomedia-mira/masf}\footnote{\label{irm-fnote}https://github.com/facebookresearch/InvariantRiskMinimization}. The MASF code was hardcoded to run for  PACS dataset; which has 3 source domains that gets divided into 2 meta train and 1 meta test domain. Their code requires atleast 2 meta train domains; which leads to an issue for only 2 source domains (30, 45). In Table~\ref{tbl:mnist} when there are only 2 source domains; their code considers  only 1 meta train domain. To resolve this issue; we create a copy of the 1 meta train domain and thus run MASF for source domains 30, 45 on MNIST.

The results for prior approaches in  Table~\ref{tbl:mnist-lenet} are taken from \cite{shankar2018generalizing}, \cite{ilse2020diva}. For the results using DomainBed setup in Table~\ref{tbl:mnist-domain-bed}, the results for prior approaches are taken from \cite{gulrajani2020domainbed}.
\paragraph{PACS}
We did not generate results for the prior approaches for PACS by developing or using existing implementations.  All the results for the prior approaches on PACS were taken from the respective papers as specified in the Table~\ref{tbl:pacs-full}, \ref{tbl:pacs-alexnet}.

\paragraph{Chest X-ray}
The results for the prior approaches CSD, IRM were generated using the implementations of both of the methods available on github \textsuperscript{\ref{csd-fnote},\ref{irm-fnote}} .

\begin{table*}[ht]
    \caption{Accuracy for Rotated MNIST datasets using the LeNet architecture as proposed in \cite{motiian2017ccsa}. The results for the prior approaches CCSA \cite{motiian2017ccsa}, D-MTAE \cite{ghifary2015multitaskae}, LabelGrad \cite{goodfellow2014explaining}, DAN \cite{ganin2016dann}, and CrossGrad \cite{shankar2018generalizing} are taken from Table 9 in \cite{shankar2018generalizing}. The results for DIVA \cite{ilse2020diva} are taken from the Table 1 in their paper.}
\centering
\begin{tabular}{@{} l |c c c c c c c}
\toprule
 
\begin{tabular}[c]{@{}l@{}}Algorithm \end{tabular} & 0  & 15  & 30  & 45  & 60 & 75 & Average \\ 
 
 \midrule
 ERM & 88.2 (1.0) & 98.6 (0.5) & 97.7 (0.6) & 97.5 (0.3) & 97.0 (0.1) & 85.6 (2.1) & 94.1 \\
 CCSA & 84.6  & 95.6 & 94.6 & 82.9 & 94.8 & 82.1 & 89.1 \\                          D-MTAE &  82.5 & 96.3  & 93.4  & 78.6  & 94.2  & 80.5 & 87.6  \\  
 LabelGrad & 89.7  & 97.8  & 98.0  & 97.1 & 96.6 & 92.1 & 95.2\\
 DAN & 86.7  & 98.0 & 97.8 & 97.4 & 96.9 & 89.1 & 94.3\\                  
 CrossGrad & 88.3 & 98.6 & 98.0  & 97.7 & 97.7 & 91.4 &  95.3\\
 DIVA & \textbf{93.5} (0.3) & 99.3 (0.1) & 99.1 (0.1) & 99.2 (0.1) & 99.3 (0.1) & 93.0 (0.4) & 97.2\\               
 \texttt{RandMatch} & 91.0 (0.9)  & \textbf{99.7} (0.2) & 99.6 (0.1) & \textbf{99.4} (0.1) & \textbf{99.7} (0.1) & 93.1 (1.1) & 97.1 \\
 \texttt{MatchDG} & 93.0 (0.5) & 99.5 (0.3)  & \textbf{99.9} (0.1) & \textbf{99.4} (0.1) & \textbf{99.7} (0.3) & \textbf{93.3} (1.1) & \textbf{97.4}\\
 
 \midrule
 
 \texttt{PerfMatch} & 96.5 (0.6) & 99.1 (0.3)  & 99.2 (0.3) & 98.6 (0.7) & 98.6 (1.0) & 94.9 (1.8) &  97.8 \\
 
\bottomrule 
\end{tabular}
\label{tbl:mnist-lenet}
\vspace{-0.3cm}
\end{table*}

\begin{table*}[h]
    \caption{Accuracy for Rotated MNIST datasets using the DomainBed setup as proposed in \cite{gulrajani2020domainbed}. The results for the approaches IRM~\cite{arjovsky2019irm}, DRO~\cite{sagawa2019distributionally}, Mixup~\cite{xu2019adversarial, yan2020improve, wang2020heterogeneous}, MLDG~\cite{li2018learningmetadg}, CORAL~\cite{sun2016deep}, MMD~\cite{li2018adversarialfeature}, DANN~\cite{ganin2016dann}, C-DANN~\cite{li2018adversarialcada} are taken from \cite{gulrajani2020domainbed}. }
\centering
\begin{tabular}{@{} l |c c c c c c c}
\toprule
 
 \begin{tabular}[c]{@{}l@{}}Algorithm \end{tabular} & 0  & 15  & 30  & 45  & 60 & 75 & Average  \\ 
 
 \midrule
 
 ERM & 95.6 (0.1) & 99.0 (0.1) & 98.9 (0.0) & 99.1 (0.1) & \textbf{99.0} (0.0)  &  \textbf{96.7} (0.2) & 98.0 \\ 
 IRM & 95.9 (0.2) & 98.9 (0.0) & 99.0 (0.0) & 98.8 (0.1) & 98.9 (0.1) & 95.5 (0.3) & 97.9\\                                                             
 DRO & 95.9 (0.1) & 98.9 (0.0) & 99.0 (0.1) & 99.0 (0.0) &\textbf{99.0} (0.0) & 96.9 (0.1) & \textbf{98.1} \\                                                            
 Mixup & 96.1 (0.2) & \textbf{99.1} (0.0) & 98.9 (0.0) & 99.0 (0.0) & 99.0 (0.1) & 96.6 (0.1)  & \textbf{98.1}\\
 MLDG & 95.9 (0.2) & 98.9 (0.1) & 99.0 (0.0) & \textbf{99.1} (0.0) & \textbf{99.0} (0.0) & 96.0 (0.2) & 98.0\\                                                            
 CORAL & 95.7 (0.2) & 99.0 (0.0) & \textbf{99.1} (0.1) & \textbf{99.1} (0.0) & \textbf{99.0} (0.0) & \textbf{96.7} (0.2) & \textbf{98.1}\\
 
 MMD & \textbf{96.6} (0.1) & 98.9 (0.0) & 98.9 (0.1) & 99.1 (0.1) & \textbf{99.0} (0.0) & 96.2 (0.1) & \textbf{98.1} \\                        
 DANN & 95.6 (0.3) & 98.9 (0.0) & 98.9 (0.0) & 99.0 (0.1) & 98.9 (0.0) & 95.9 (0.5) & 97.9 \\                                               C-DANN & 96.0 (0.5) & 98.8 (0.0) & 99.0 (0.1) & \textbf{99.1} (0.0) & 98.9 (0.1) & 96.5 (0.3) & 98.0 \\     
 
 \texttt{RandMatch} & 95.4 (0.4) & 98.2 (0.1) & 97.9 (0.5) & 98.5 (0.1) & 98.1 (0.1) & 94.3 (0.3) & 97.1 \\
 \texttt{MatchDG} & 95.9 (0.1) & 98.4 (0.1) & 98.6 (0.2)  & 98.9 (0.2) & 98.7 (0.1) & 95.1 (0.3) & 97.6 \\
 
\bottomrule 
\end{tabular}
\label{tbl:mnist-domain-bed}
\vspace{-0.3cm}
\end{table*}

\section{Additional Evaluation on Rotated MNIST and Fashion-MNIST}
Here we present results for additional experiments on Rotated MNIST and Fashion-MNIST datasets using \mdg.

\subsection{Comparing \mdg\ with prior work on the LeNet Network}
\label{sm:2-layer}

Table~\ref{tbl:mnist-lenet} compares the accuracy results for \mdg\ with prior work on the LeNet architecture~\cite{motiian2017ccsa}. In this setup, there are six domains in total ($0^\circ, 15^\circ, 30^\circ, 45^\circ, 60^\circ, 75^\circ$). For each test domain, the remaining five domains are used as source training domains.   We observe that matching-based training methods \texttt{RandMatch} and \mdg\ outperform prior work on the all the domains except the test domain 0, where \mdg\ is competitive to the best performing approach DIVA. They also achieve accuracy almost equal to the oracle case \texttt{PerfMatch}  for target angles ($15^\circ$ to $60^\circ$) that lie in between the source domains.

\subsection{Comparing \mdg\ on Domain Bed Benchmark}
\label{sm:mnist-domain-bed}
Table \ref{tbl:mnist-domain-bed} compares the accuracy results for \mdg\ with prior work on the setup proposed by \cite{gulrajani2020domainbed}. This setup is similar to the setup in the section~\ref{sm:2-layer}, however, it uses a custom CNN architecture and all the $70,000$ images for each domain. For a fair comparison, we use the same custom CNN architecture for learning the match function during the \mdg\ Phase-I. Even under this constraint, \mdg\ average accuracy is only 0.5\% percent behind the best performing approaches (CORAL, MMD). As supported by our experiments before (Table~\ref{tbl:mnist}, ~\ref{tbl:metrics}) we believe that using more powerful architectures (ResNet-18, ResNet-50) during the \mdg\ Phase-1 should help in learning a better match function and consequently better average accuracy. 

\subsection{Accuracy Results using a fraction of perfect matches}
\label{sm:frac-matches}

To show the importance of learning a good match function, we present the results of approaches with match function capturing some fixed percentage of perfect matches in the Table~\ref{tbl:fraction}. For both Rotated \& Fashion MNIST, we observe that the approaches that contain a higher proportion of perfect matches perform better in terms of accuracy on target domains. Hence, the quality of the match function leads to monotonic effect on the generalization performance of the matching approaches. 

\begin{table}[th]
\caption{Accuracy results using a fraction of perfect matches during training}
\centering
\begin{tabular}{@{}l | c c@{}}
\toprule
                                                                & MNIST       & \begin{tabular}[c]{@{}l@{}}Fashion-MNIST\end{tabular} \\ \midrule
\begin{tabular}[c]{@{}l@{}}RandMatch\end{tabular}          &  93.4 (0.26) & 77.0 (0.42)                                        \\ \midrule
Approx 25\%                                                     & 93.8 (0.48) & 77.8 (0.79)                                             \\
Approx 50\%                                                     & 94.0 (0.42) & 78.0 (0.78)                                             \\
Approx 75\%                                                     & 94.7 (0.14) & 78.9 (0.31)                                             \\ \midrule
\begin{tabular}[c]{@{}l@{}}PerfMatch (100\%)\end{tabular} & 96.0 (0.41) & 81.6 (0.46)                                             \\ \bottomrule
\end{tabular}
\label{tbl:fraction}
\end{table}

\subsection{Quality of representation learnt in the  classification phase}
\label{sm:metrics-phase2}
In addition to Table~\ref{tbl:metrics} that shows metrics for Phase 1 of \mdg, we compute the metrics for the classification phase (Phase 2) of \mdg.  Specifically, we compute the Overlap, Top-10 overlap and the Mean Rank metrics (\secref{sm:eval-metrics}) for matched pairs of inputs based on the representation learnt at the end of the  classification phase. 

Table~\ref{tbl:sm-metrics} shows the matching metrics for \mdg\ and compares it to the matches based on the representations (last layers) learnt by the  \epmatch\ and \ermatch\ methods. For both Rotated-MNIST and Fashion-MNIST datasets, \mdg\ obtains mean rank, Top 10 overlap and total overlap between \epmatch\ and \ermatch. As the Fashion-MNIST dataset is more complex than the digits dataset, we observe that the mean rank with different training techniques is higher than the corresponding values  for the Rotated-MNIST dataset.

\subsection{Matching metrics for Fashion-MNIST dataset with 2000 training samples per domain}
\label{sm:fmnist-lower}
In the main text (\tabref{tbl:metrics}), we computed matching metrics for MatchDG (Phase 1) over the Fashion-MNIST dataset with $10000$ samples per domain. Here  we compute the same metrics for a smaller dataset with 2000 samples per domain. 

We compute the metric for the default instantiation of Phase 1 of \mdg\ initialized with random matches and compare it to an {\em oracle} version of   \mdg\ initialized with perfect matches. In addition, we compare the metrics for matches generated using baseline 
ERM (last layer of the network) in order to understand its effectiveness as a matching strategy in Phase 1. Table~\ref{tbl:metrics-abl-2} shows the metrics for Phase 1 of \mdg\ with 2K images from the Fashion-MNIST dataset, and reproduces the metrics for the 10K dataset from \tabref{tbl:metrics} for ease of comparison. 
We observe that the mean rank of perfect matches improves for the smaller dataset. Similarly, the overlap and top-10 overlap also increase for the smaller dataset. A possible reason is that there are fewer alternative matches to the perfect match as the number of samples is reduced. 
That said, while the overlap with perfect matches may decrease as sample size increases,  the accuracy of the resultant classifier may still increase due to higher sample size. 

\begin{table}[tb]
\caption{Mean rank, Top-10 overlap, and overlap metrics for the matches learnt in the classification phase (Phase 2), when trained on all five source domains in the Rotated MNIST and FashionMNIST datasets.}
 \centering
\footnotesize
 
 \resizebox{\linewidth}{!}{
\begin{tabular}{@{}l | l | c c c @{}}
\toprule
Dataset                                                                            & \begin{tabular}[c]{@{}c@{}}Method \end{tabular} & Overlap (\%) &  \begin{tabular}[c]{@{}c@{}}Top 10 Overlap (\%) \end{tabular} & \begin{tabular}[c]{@{}c@{}}  Mean Rank \end{tabular} 
\\ \midrule
\multirow{3}{*}{\begin{tabular}[c]{@{}l@{}}Rotated\\ MNIST \end{tabular}}        
                                                                                   & RandMatch                                                    & 2.2 (0.18) & 13.5 (0.36) &  75.5 (1.65)   \\
                                                                                & {\begin{tabular}[l]{@{}l@{}}MatchDG\\ (Phase 2)        \end{tabular}}                                              & { \bf 17.7 }(0.97)  & { \bf 41.8 (2.89) } &  { \bf 39.6 (3.58) }  \\ \cmidrule {2-5}
                                                                                   & \begin{tabular}[l]{@{}l@{}}PerfMatch \\ (Oracle) \end{tabular}                                                  & 78.2 (1.91)  & 95.5 (1.37)  & 1.84 (0.67)    \\ 
                                                                                    \midrule
\multirow{3}{*}{\begin{tabular}[c]{@{}l@{}}Fashion\\ MNIST (10k) \end{tabular}}        
                                                                                   & RandMatch                                                    & 0.5 (0.04)  & 3.2 (0.17)  & 420.0 (7.27)    \\
                                                                                              & {\begin{tabular}[l]{@{}l@{}}MatchDG\\ (Phase 2)        \end{tabular}}                                             & { \bf 1.8 (0.13) }  & { \bf 8.5 (0.56) } & { \bf 296.5 (9.94)} \\ \cmidrule {2-5}
                                                                                   & \begin{tabular}[l]{@{}l@{}}PerfMatch \\ (Oracle) \end{tabular}                                                       &  9.2 (0.21) & 30.5 (0.38)  & 114.7 (3.29)    \\ 
                              \bottomrule 
\end{tabular}
}
\label{tbl:sm-metrics}
\end{table}

\subsection{Iterative updating of matches in Phase-1 of MatchDG}
\label{sm:iterative-ctr}
In Section~\ref{sec:two-phase}, we proposed Phase 1 of the \mdg\ algorithm with iterative updates to the computed matches. Here we compare the quality of matches learnt at the end of Phase 1 with or without using the iterative updating. Without the iterative updates, the matches always remain the same as the random matches with which the algorithm was initialized.   

Table~\ref{tbl:metrics-abl} shows metrics computed at the end of Phase 1 of \mdg\ using both an  iterative approach vs. a non-iterative approach. The iterative approach provides a $2\times$ improvement on the overlap with perfect matches for rotated MNIST and Fashion-MNIST datasets. Since higher overlap in the inferred matches results in better classification accuracy in Phase 2 (as shown in \tabref{tbl:fraction}), we conclude that using the iterative approach improves the domain generalization capability of \mdg.

\begin{table}[h]
\footnotesize
    \caption{Metrics computed at \mdg\ (Phase 1) for Fashion-MNIST dataset with 2K and 10K sample size used for training. Lower is better for mean rank.
}
\centering
\resizebox{\linewidth}{!}{
\begin{tabular}{@{}l | l |c c c @{}}
\toprule
Dataset                                                                            & \begin{tabular}[c]{@{}c@{}}Method \end{tabular} & \begin{tabular}[c]{@{}c@{}}Overlap (\%)\end{tabular}  &  \begin{tabular}[c]{@{}c@{}}Top 10 Overlap (\%) \end{tabular} &  Mean Rank
\\ 

\midrule

\multirow{3}{*}{\begin{tabular}[c]{@{}l@{}}Fashion\\ MNIST (2k) \end{tabular}}    

& \begin{tabular}[c]{@{}c@{}} ERM \end{tabular}                                                      &  8.7 (0.14) &  36.0 (1.41) & 36.1 (1.66)   \\

& \begin{tabular}[c]{@{}c@{}}\mdg\ \\  (Default) \end{tabular}                                                      &  { \bf 38.5} (2.11) &  { \bf 71.2} (0.91) & { \bf 15.9} (0.54)   \\\cmidrule {2-5}

 & \begin{tabular}[l]{@{}l@{}}\mdg\  \\ (PerfMatch) \end{tabular}                                                         & 69.5 (10.8)  & 91.9 (5.8)  &  3.3 (2.4)    \\ 

\midrule 

\multirow{3}{*}{\begin{tabular}[c]{@{}l@{}}Fashion\\ MNIST (10k) \end{tabular}}           & \begin{tabular}[c]{@{}l@{}}
                                         ERM\end{tabular}  & 2.1 (0.12)  & 11.1 (0.63)   & 224.3 (8.73)  \\ 
                                                                                   
                                                                                              & \begin{tabular}[c]{@{}c@{}}\mdg \\ (Default) \end{tabular}                                                      & {\bf 17.9} (0.62) &  {\bf 43.1} (0.83) & {\bf 89.0} (3.15)   \\\cmidrule {2-5}
                                                                                   & \begin{tabular}[l]{@{}l@{}}\mdg \\ (PerfMatch) \end{tabular}                                                         & 56.2 (1.79)  & 87.2 (1.48)  &  7.3 (1.18)    \\ 
                              \bottomrule 
\end{tabular}}
\label{tbl:metrics-abl-2}
\end{table}

\begin{table}[tb]
\footnotesize
\caption{Overlap with perfect matches, top-10 overlap and the mean rank for perfect matches for Iterative and Non Iterative \mdg\ over all training domains. Lower is better for mean rank.
}
\centering
 \resizebox{\linewidth}{!}{
\begin{tabular}{@{}l | l |c c c @{}}
\toprule
Dataset                                                                            & \begin{tabular}[c]{@{}c@{}}Method (Phase 1) \end{tabular} & \begin{tabular}[c]{@{}c@{}}Overlap (\%)\end{tabular}  &  \begin{tabular}[c]{@{}c@{}}Top 10 Overlap (\%) \end{tabular} &  Mean Rank
\\ 

\midrule

\multirow{3}{*}{\begin{tabular}[c]{@{}l@{}} MNIST\end{tabular}}
                              
& \begin{tabular}[l]{@{}l@{}} \mdg\ \\  (Iterative) \end{tabular}  

& {\bf 28.9} (1.24) & {\bf 64.2} (2.42)  & {\bf 18.6} (1.59) \\ 
                                                                                  
& \begin{tabular}[l]{@{}l@{}} \mdg\ \\ (Non Iterative) \end{tabular}        
& 12.3 (0.28) & 37.9    (0.27)  & 37.8 (0.47)  \\ 

\midrule \midrule

\multirow{3}{*}{\begin{tabular}[c]{@{}l@{}}Fashion\\ MNIST (10k)\end{tabular}}

& \begin{tabular}[l]{@{}l@{}} \mdg\ \\ (Iterative) \end{tabular} 
            & {\bf 17.9} (0.62) &  {\bf 43.1} (0.83) & {\bf 89.0} (3.15)  
             \\\cmidrule {2-5}

& \begin{tabular}[l]{@{}l@{}} \mdg\ \\ (Non Iterative) \end{tabular}                                                      &  7.7 (0.28) &  23.8 (0.78) & 153.9 (9.63)   \\
 
\bottomrule 

\end{tabular}}
\label{tbl:metrics-abl}
\end{table}

\begin{figure*}[tb]
    \centering
    \includegraphics[scale=0.38]{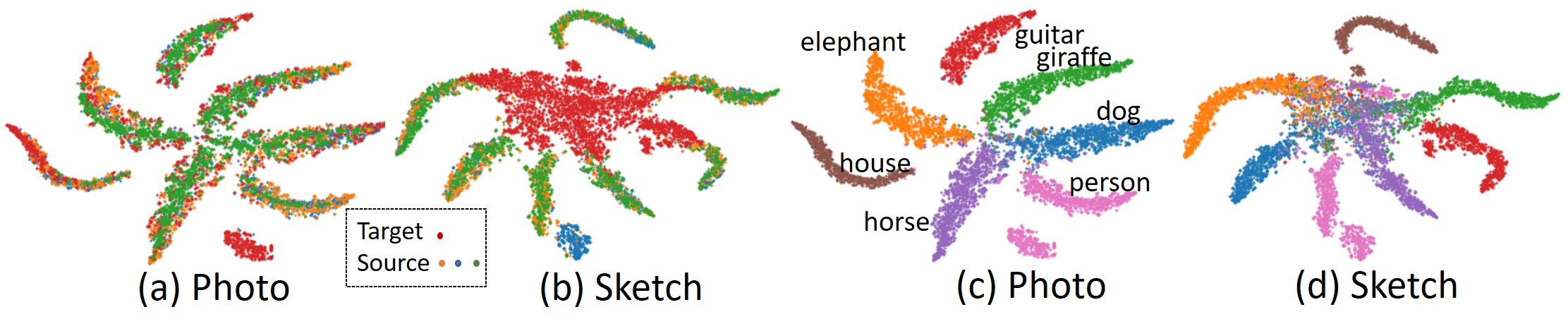}
    \caption{The t-SNE plots for visualizing features learnt in \mdg\ Phase 1. (a)-(c) are for Photo as the target domain  and (b)-(d) are for Sketch.
    }
    \label{fig:t-sne}
\end{figure*}

\section{Additional Evaluation on PACS}
\subsection{ResNet Results}
\label{sm:resnet}

Table~\ref{tbl:pacs-full} extends the evaluation on PACS with ResNet-18, ResNet-50 (Table~\ref{tbl:pacs-r18}, \ref{tbl:pacs-r50}) in the main text by adding comparison with more prior approaches. We observe that \texttt{MDGHybrid} beats most of the prior approaches on both the ResNet-18 and ResNet-50 evaluation, except DDEC~\cite{asadi2019shapedg}, and RSC~\cite{huang2020self}. However, as stated in the main paper, DDEC~\cite{asadi2019shapedg} rely on data from additional source like Behance BAM! dataset and we are also not sure about the validation mechanism used by them. If the validation mechanism used by them includes data from the target domain during validation, then \texttt{MatchDG} (Test), \texttt{MDGHybrid} (Test) obtain better accuracy than them.

\subsection{AlexNet Results}
\label{sm:alexnet}

Finally, we compare \texttt{RandMatch} and \mdg\ to prior work on generalization accuracy  for the PACS dataset using the AlexNet architecture. As in Table~\ref{tbl:pacs-full}, the task is to generalize to a test domain after training on the remaining three domains.

For all test domains, Table~\ref{tbl:pacs-alexnet} shows that both \texttt{RandMatch} and \mdg\ outperform the baseline ERM method. Averaging over the test domains, \texttt{MDGHybrid} provides improvement over \mdg\ (70.46 versus 69.91).  Moreover, on average \mdg, \texttt{MDGHybrid} are better than many previous approaches D-MTAE~\cite{ghifary2015multitaskae}, DBADG~\cite{li2017pacs}, CIDDG~\cite{li2018adversarialcada}, HEX~\cite{wang2019learning} and FeatureCritic~\cite{li2019feature}, but some other methods like MASF~\cite{dou2019masf}, DGER~\cite{zhao2020domain}, RSC~\cite{huang2020self} achieve higher accuracy than \mdg. Since  \mdg\ outperforms most of the prior work on the same dataset  when trained using ResNet-18, ResNet-50 architecture (\tabref{tbl:pacs-full}),  we speculate that \mdg\ requires a powerful underlying network architecture to use matches effectively for classification.

\subsection{T-SNE Plots}
\label{sm:t-sne}
Beyond accuracy, we investigate the quality of representations learnt by MatchDG using t-SNE~\cite{maaten2008visualizing} in Figure~\ref{fig:t-sne}. 
Comparing the Phase I models for the easiest (\emph{Photo}) and hardest (\emph{Sketch}) unseen domains (Figs.~\ref{fig:t-sne}a,b), we find that \mdg\ achieves a higher overlap  between train and test domains  for \emph{Photo} than \emph{Sketch}, highlighting the difficulty of generalizing to the \emph{Sketch} domain, even as classes are well-separated in the training domains for both models (Figs.~\ref{fig:t-sne}c,d).

\begin{table*}[ht]
\centering
\caption{Accuracy on PACS with ResNet 18 (default, top row set), Resnet 18 with test domain validation (middle row set), and  ResNet 50 (bottom row set). The results for JiGen~\cite{carlucci2019domain}, S-MLDG~\cite{li2020sequential}, D-SAM~\cite{d2018domain}, MMLD~\cite{matsuura2020domain}, DDAIG~\cite{zhou2020deep} SagNet~\cite{nam2019reducing},
DDEC~\cite{asadi2019shapedg},
DANN~\cite{ganin2016dann}, C-DANN~\cite{li2018adversarialcada}, DRO~\cite{sagawa2019distributionally},
Mixup~\cite{xu2019adversarial, yan2020improve, wang2020heterogeneous},
IRM~\cite{arjovsky2019irm}, MLDG~\cite{li2018learningmetadg}, MMD~\cite{li2018adversarialfeature}, CORAL~\cite{sun2016deep},  were taken from the DomainBed~\cite{gulrajani2020domainbed} paper. For G2DM~\cite{albuquerque2020generalizing}, DGER~\cite{zhao2020domain}, CSD~\cite{piratla2020efficient}, MASF~\cite{dou2019masf},  EpiFCR~\cite{li2019episodic}, MetaReg~\cite{balaji2018metareg}, RSC~\cite{huang2020self} it was taken from their respective paper.
}
\label{tbl:pacs-full}
\begin{tabular}{@{}l | cccc| c@{}}
\toprule
          & P            & A         & C        & S            & Average.  \\ \midrule
ERM       & 95.38 (0.86) & 77.68 (0.35) & 78.98 (0.59) & 74.75 (1.70) & 81.70 \\
JiGen     & 96.0        & 79.42        & 75.25        & 71.35        & 80.41 \\
MASF      & 94.99 (0.09)       & 80.29 (0.18)   & 77.17 (0.08)       & 71.69 (0.22)        & 81.04 \\
G2DM       & 93.75        & 77.78        & 75.54        & 77.58        & 81.16 \\
DGER       & 96.65 (0.21)        & 80.70 (0.71)        & 76.40 (0.34)        & 71.77 (1.27)        & 81.38 \\
CSD       & 94.1 (0.2)       & 78.9 (1.1)       & 75.8 (1.0)       & 76.7 (1.2)       & 81.4 
\\
EpiFCR      & 93.9        & 82.1    & 77.0        & 73.0        & 81.5 \\
MetaReg   & 95.5 (0.24) & 83.7 (0.19) & 77.2 (0.31) & 70.3 (0.28) & 81.7 \\
S-MLDG      & 94.80        & 80.50    & 77.80       & 72.80        & 81.50 \\
D-SAM      & 94.30        & 79.48    & 77.13       & 75.30        & 81.55 \\
MMLD      & 96.09        & 81.28    & 77.16        & 72.29        & 81.83 \\
DDAIG     & 95.30     &  \textbf{84.20}        & 78.10         & 74.70         &  83.10 \\  
SagNet     & 95.47      &  83.58        & 77.66         & 76.30         &  83.25 \\  
DDEC       & \textbf{96.93}        & 83.01        & 79.39        & 78.62       & 84.46 \\ 
RSC       & 95.99        & 83.43        & 80.31        & \textbf{80.85}        & \textbf{85.15} \\ 
\texttt{RandMatch} & 95.37 (0.25) & 78.16 (1.51) &  78.83 (1.18) & 75.13 (1.90) &   81.87 \\
\texttt{MatchDG}   & 95.93 (0.21)  & 79.77 (0.12)  & 80.03 (0.03) & 77.11 (0.35) &  83.21 \\ 
\texttt{MDGHybrid}   & 96.15 (0.40)  & 81.71 (0.75)  & \textbf{80.75} (0.50) & 78.79 (1.25) & 84.35  \\ 

\midrule

G2DM    (Test)   & 94.63        & 81.44        & 79.35        & 79.52        & 83.34 \\
\texttt{RandMatch} (Test) & 95.57 (0.17) & 79.09 (1.09) &  79.37 (0.89) & 77.60 (0.87) &  82.91  \\  
\texttt{MatchDG} (Test)   & 96.53 (0.05)  &  81.32 (0.38) & 80.70 (0.54) & 79.72 (1.01)  & 84.56   \\ 
\texttt{MDGHybrid} (Test)   & \textbf{96.67} (0.20)  & \textbf{82.80} (0.32)  &  \textbf{81.61} (0.06) & \textbf{81.05} (1.01)  & \textbf{85.53} \\ 

\midrule

DomainBed    (ResNet50)   & 97.8 (0.0)        & 88.1 (0.1)        & 77.9 (1.3)        & 79.1 (0.9)        &  85.7 \\
MASF (ResNet50)      & 95.01 (0.10)       & 82.89 (0.16)   & 80.49 (0.21)       & 72.29 (0.15)        & 82.67 \\
C-DANN (ResNet50)   & 97.0 (0.4)       &  84.0 (0.9)    & 78.5 (1.5)      & 71.8 (3.9) &  82.8 \\
MetaReg (ResNet50)   & 97.6 (0.31)       &  87.2 (0.13)    & 79.2 (0.27)      & 70.3 (0.18) &  83.6 \\
DRO (ResNet50)   & 98.0 (0.3)       &  86.4 (0.3)    & 79.9 (0.8)      & 72.1 (0.7) &  84.1 \\
Mixup (ResNet50)   & 97.7 (0.2)       &  86.5 (0.4)    & 76.6 (1.5)      & 76.5 (1.2) &  84.3 \\
IRM    (ResNet50)   & 96.7 (0.3)       &  85.0 (1.6)    & 77.6 (0.9)      & 78.5 (2.6) &  84.4 \\
DANN (ResNet50)   & 97.6     (0.2)       &  85.9 (0.5)    & 79.9 (1.4)      & 75.2 (2.8) &  84.6 \\
MLDG (ResNet50)   & 97.0 (0.9)       &  \textbf{89.1} (0.9)    & 78.8 (0.7)      & 74.4 (2.0) &  84.8 \\
MMD (ResNet50)   & 97.5 (0.4)       &  84.5 (0.6)    & 79.7 (0.7)      & 78.1 (1.3) &  85.0 \\
DGER       & 98.25 (0.12)        & 87.51 (1.03)        & 79.31 (1.40)        & 76.30 (0.65)        & 85.34 \\
CORAL    (ResNet50)   & 97.6 (0.0)  &  87.7 (0.6)    & 79.2 (1.1)      & 79.4 (0.7)   &  86.0 \\
RSC    (ResNet50)   & 97.92  &  87.89  & 82.16  & \textbf{83.35}   &  \textbf{87.83} \\
\texttt{RandMatch} (ResNet50) & 97.89 (0.11    ) & 82.16 (0.19) &  81.68 (0.45) & 80.45 (0.19) & 85.54  \\
\texttt{MatchDG} (ResNet50)   & 97.94 (0.27)  & 85.61 (0.81)  & 82.12 (0.69)  &  78.76 (1.13) & 86.11  \\  

\texttt{MDGHybrid} (ResNet50)   & \textbf{98.36} (0.06) & 86.74 (1.01)  & \textbf{82.32 } (0.76) & 82.66 (0.48)  & 87.52 \\  

\bottomrule
\end{tabular}
\end{table*}

\begin{table*}[h]

\centering
\caption{Accuracy results on the PACS dataset trained with Alexnet (default, top row set), and Alexnet with test domain validation (bottom row set). The results for  DBADG \cite{li2017pacs}, MTSSL~\cite{albuquerque2020improving}, CIDDG \cite{li2018adversarialcada}, HEX~\cite{wang2019learning}, FeatureCritic~\cite{li2019feature}, MLDG \cite{li2018learningmetadg}, REx~\cite{krueger2020out}, CAADG~\cite{rahman2020correlation}, Epi-FCR \cite{li2019episodic}, MASF \cite{dou2019masf} were taken from the DomainBed~\cite{gulrajani2020domainbed} paper. The results for DANN~\cite{ganin2016dann}, IRM \cite{arjovsky2019irm},  G2DM \cite{albuquerque2020generalizing} were taken from the G2DM paper. The results for D-MTAE \cite{ghifary2015multitaskae}, MetaReg \cite{balaji2018metareg}, JiGen \cite{carlucci2019domain}, DGER~\cite{zhao2020domain} and RSC~\cite{huang2020self} were taken from the respective paper.}
\begin{tabular}{@{}l | cccc| c@{}}
\toprule
           Method/Test Domain             & Photo       & Art Painting & Cartoon     & Sketch      & Average \\ \midrule
ERM                     & 85.29 (0.22) & 64.23 (0.18)  & 66.61 (0.88) & 59.25 (0.83) & 68.85    \\
D-MTAE                  & \textbf{91.12}       & 60.27        & 58.65       & 47.68       & 64.45   \\
DANN    & 88.10     & 63.20        & 67.50       & 57.00       & 69.00   \\
DBADG                   & 89.50       & 62.86        & 66.97       & 57.51       & 69.21   \\
MTSSL & 84.31       & 61.67           & 67.41        & 63.91 
    &   69.32  \\
CIDDG                   & 78.65       & 62.70        & 69.73       & 64.45       & 69.40   \\
HEX                     & 87.90       & 66.80        & 69.70       & 56.30       & 70.20   \\
FeatureCritic                     & 90.10       & 64.40        & 68.60       & 58.40       & 70.40   \\
MLDG                    & 88.00       & 66.23        & 66.88       & 58.96       & 70.71   \\
REx                    & 89.74       & 67.04        & 67.97       & 59.81       & 71.14   \\
CAADG                    & 89.16       & 65.52        & 69.90       & 63.37       & 71.98   \\
Epi-FCR                 & 86.1        & 64.7         & 72.3        & 65.0        & 72.0      \\
IRM                     & 89.97       & 64.84        & 71.16       & 63.63       & 72.39   \\
MetaReg                 & 91.07 (0.41)      & 69.82 (0.76)        & 70.35 (0.63)      & 59.26 (0.31)       & 72.62   \\
JiGen                   & 89.00       & 67.63        & 71.71       & 65.18       & 73.38   \\
G2DM                   & 88.12       & 66.60        & 73.36       & 66.19       & 73.55   \\
MASF                    & 90.68      & 70.35        & 72.46       & 67.33        & 75.21  \\
DGER                    & 89.92 (0.42)      & 71.34 (0.87)        & 70.29 (0.77)       & { \bf 71.15 (1.01) }       & 75.67  \\
RSC                    & 90.88      & { \bf 71.62 }        & \textbf{75.11}       & 66.62       & { \bf 76.05 }  \\
\texttt{RandMatch} & 85.42 (0.52) & 65.54 (1.14)  & 68.41 (1.62) & 59.46 (1.35) & 69.71    \\
\mdg     & 85.41 (0.41) & 66.21 (0.64)  & 68.47 (1.10) & 59.56 (1.24) &  69.91  \\ \texttt{MDGHybrid}     & 85.67 (0.67) & 66.89 (1.23)  & 68.89 (1.08) & 60.39 (2.13) &  70.46 \\ \midrule

\texttt{RandMatch} (Test) & 86.04 (0.47) & 67.35 (0.32)  & 69.71 (0.56) & 64.66 (1.08) & 71.94  \\
\mdg (Test) & 86.52 (0.43) & 67.99 (0.56)  & 69.92 (0.09) & 65.64 (1.48) & 72.52  \\
\texttt{MDGHybrid} (Test) & 87.03 (0.29) & 67.97 (0.79)  & 71.06 (0.43) & 67.19 (0.44) & 73.31   \\

\bottomrule
\end{tabular}
\label{tbl:pacs-alexnet}

\end{table*}





\end{document}
